\documentclass[11pt]{article}

\usepackage[margin=1in]{geometry}
\usepackage{amsmath,amssymb,amsthm}
\usepackage{microtype}
\usepackage{hyperref}

\newtheorem{theorem}{Theorem}
\newtheorem{corollary}[theorem]{Corollary}
\newtheorem{lemma}[theorem]{Lemma}
\numberwithin{equation}{section}
\newcommand{\R}{\mathbb{R}}

\usepackage{tikz}
\usetikzlibrary{arrows.meta, calc}

\usepackage{xcolor}
\definecolor{ed}{RGB}{225,0,100}

\definecolor{ew}{RGB}{23,180,0}

\usepackage{tcolorbox}
\tcbuselibrary{breakable}
\newtcolorbox{aibox}[1]{%
  breakable,
  colback=gray!5,
  colframe=gray!70,
  fonttitle=\bfseries\small,
  title={#1}%
}

\title{AdaBoost Does Not Always Cycle: A Computer-Assisted Counterexample}
\author{Erik Y.\ Wang\thanks{Department of Computer Science, Stanford University. Email: \texttt{erikwang@stanford.edu}}}
\date{April 17, 2026}

\begin{document}

\maketitle

\begin{abstract}
We give a computer-assisted counterexample to the open question, posed by Rudin, Schapire, and Daubechies in COLT 2012, of whether exhaustive AdaBoost always converges to a finite cycle.
The construction is based on a block-product gadget whose two factors share an exact period-2 orbit for their 5-step branch maps, but whose linearized return maps have dominant eigenvalues with an irrational logarithmic ratio.
This irrationality forces the burst-winner sequence to have an irrational asymptotic frequency, precluding eventual periodicity.
All assertions are certified by exact rational arithmetic. 
This work was developed in collaboration with GPT-5.4 Pro and Claude Opus 4.6.
\end{abstract}

\section{Introduction}
\label{sec:intro}

The AdaBoost algorithm~\cite{FS1997} defines a discrete dynamical system on the probability simplex in $\mathbb{R}^m$~\cite{RDS2004}, where a sequence of distributions $D_t$ is updated based on an $m \times N$ matrix $M \in \{-1, +1\}^{m \times N}$ representing the correlations between $N$ weak hypotheses and $m$ training examples. Under the exhaustive selection of weak hypotheses, the next distribution is determined by the mapping $D_{t+1}(i) = D_t(i) / (1 + r_t M_{ij_t})$, where $j_t = \text{argmax}_j (D_t^\top M)_j$ and $r_t = (D_t^\top M)_{j_t}$. While it is proven that $D_t$ converges to a unique fixed point when the weak learning assumption ($r_t \ge c > 0$) is violated, the behavior remains mathematically elusive when the assumption holds. The iterative map is highly discontinuous due to the $\text{argmax}$ operation, leaving the long-term asymptotic behavior---specifically whether the sequence $\{D_t\}$ always enters a cycle of length $l$ such that $D_{kl+b} \to \bar{D}_b$ as $k \to \infty$---an open question~\cite{RDS2004,RSD2012}.

The problem under consideration is the following: prove or disprove that for every $\{-1, +1\}$-valued matrix $M$, the sequence of distributions $D_t$ computed by the AdaBoost~\cite{RSD2012} mapping necessarily converges to a cycle. In this paper, we disprove the conjecture by constructing an explicit counterexample.\footnote{The computational certificate is available at this link: \url{https://github.com/ewang26/AdaBoost}} 

\subsection*{Proof outline}

We construct a $20 \times 8$ sign matrix $M_0 = L_A \boxplus L_B$ as a block product of two small gadgets whose dynamics decouple on an invariant product manifold (Section~\ref{sec:gadgets}). Both gadgets share an exact period-2 orbit for the 5-step branch maps (Section~\ref{sec:2cycle}). The linearized return maps at this orbit have dominant eigenvalues $\lambda$ and $\kappa$ whose ratio $\log\lambda/\log\kappa$ is irrational, as shown by a norm argument in algebraic number theory (Section~\ref{sec:spectra}). Rigorous interval arithmetic certifies explicit invariant sectors around the orbit, together with contraction rates and burst-locking margins (Sections~\ref{sec:halfstep}--\ref{sec:locking}). Within these sectors, the irrational eigenvalue ratio forces the burst-winner sequence to have an irrational asymptotic frequency, which precludes eventual periodicity and hence finite-cycle convergence (Sections~\ref{sec:burst}--\ref{sec:nocycle}). A row-duplication argument converts an explicit rational starting point into a uniform-start counterexample on a larger matrix $\widetilde{M}$ with weak-learning margin $1/5$ (Section~\ref{sec:start}).

\section{Main result}
\label{sec:main}

We work with exhaustive AdaBoost on a fixed sign matrix $M\in\{-1,+1\}^{m\times N}$, with smallest-index tie-breaking. 
Defining the \emph{edge} 
$\mu_t(j)$
of hypothesis $j$ at time $t$ by
$$
\mu_t(j)=(D_t^\top M)_j,
\qquad
j_t=\min\operatorname*{argmax}_{1\le j\le N}\mu_t(j),
\qquad
r_t=\mu_t(j_t),
$$
the update is
$$
D_{t+1}(i)=\frac{D_t(i)}{1+r_t M_{ij_t}}.
$$
We call the distribution $D_t\in\Delta^{m-1}$ the \emph{state} of the system at time~$t$; the map $D_t\mapsto D_{t+1}$ defined above is a single AdaBoost \emph{update step}.
Let $\Delta^{m-1}$ be the $m-1$-dimensional simplex in $\R^m$.
The \emph{weak-learning margin} of $M$ is
\[
\gamma(M) = \min_{D \in \Delta^{m-1}} \max_{1 \le j \le N} (D^\top M)_j.
\]

We show the following result:

\begin{theorem}
\label{thm:main}
There exists a finite $\{-1,+1\}$-valued matrix $\widetilde M$ with weak-learning margin
\(
\gamma(\widetilde M)=\frac15>0
\)
such that standard exhaustive AdaBoost on $\widetilde M$, started from the exact uniform distribution and using smallest-index tie-breaking, does \textbf{not} converge to any finite cycle.

Consequently, the COLT question ``does exhaustive AdaBoost always converge to a finite cycle?'' has a \textbf{negative} answer.
\end{theorem}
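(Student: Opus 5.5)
The plan follows the outline. I build a block product $M_0=L_A\boxplus L_B$ of two small gadgets, certify the dynamics near a shared periodic orbit by exact and interval arithmetic, and extract an irrational winner frequency. Then I transfer the result to a uniform start by duplicating rows.

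\textbf{Step 1: decoupling.} The block product is arranged so that the rows of $M_0$ are indexed by pairs (rows of $L_A$) $\times$ (rows of $L_B$), and the columns are those of $L_A$ and $L_B$ placed side by side. On the product manifold $D=D^A\otimes D^B$ an update by a column of $L_A$ acts only on the $A$-factor and leaves $D^B$ unchanged. The state then factors, and the only coupling is the argmax competition between the best $A$-edge and the best $B$-edge.

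\textbf{Step 2: local model.} In each factor I compute an exact rational period-2 orbit of the 5-step branch map, using the same branch word in both gadgets. I then compute the Jacobians of the return maps. Their dominant eigenvalues $\lambda,\kappa$ are algebraic; I show $\log\lambda/\log\kappa\notin\mathbb{Q}$ by a norm argument. If $\lambda^p=\kappa^q$ held, taking norms in a common number field and comparing prime factorizations (or units) would give a contradiction.

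\textbf{Step 3: certified sectors.} With interval arithmetic I certify, for each factor, a cone/sector around the orbit that the return map maps into itself. On this sector the deviation contracts at rates pinched near $\lambda$ and $\kappa$. I also certify \emph{burst locking}: whenever the $A$-deviation dominates in a quantitative sense, the argmax selects $A$-columns for a whole 5-step burst, and symmetrically for $B$. The rule is monotone in a scalar such as $\log|\delta_A|-\log|\delta_B|$.

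\textbf{Step 4: irrational frequency.} Within the sectors the burst-winner sequence behaves like a rotation sequence. A burst won by $A$ shrinks $\log|\delta_A|$ by about $|\log\lambda|$, and a $B$ burst shrinks $\log|\delta_B|$ by about $|\log\kappa|$. The next winner is the factor with the larger deviation. The bounded-discrepancy estimates, with errors summable because of the contraction, force the fraction of $A$-bursts to converge to $\log\kappa/(\log\lambda+\log\kappa)$ (suitably signed), which is irrational. An eventually periodic winner sequence would have a rational frequency. Convergence of $D_t$ to a finite cycle would make the selected column sequence eventually periodic, since the edges separate strictly at the limit points. The strict separation must be argued from the margins, so that ties at the limit cannot spoil this. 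Hence no finite cycle is reached.

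\textbf{Step 5: uniform start and margin.} I pick an explicit rational start $D_0$ in the certified basin with positive rational entries. Writing $D_0(i)=n_i/K$, I replace row $i$ by $n_i$ copies to form $\widetilde M$. Then AdaBoost on $\widetilde M$ from the uniform distribution projects exactly onto the original trajectory, because edges and tie-breaking are identical. Row duplication does not change the set of achievable edge vectors, so $\gamma(\widetilde M)=\gamma(M_0)$, and I compute $\gamma(M_0)=1/5$ by an exact LP with a primal--dual rational certificate.

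The \textbf{main obstacle} is Steps 3 and 4 together. The invariant sectors must be certified robustly enough that the linearization errors do not destroy either the burst locking or the rotation-number asymptotics. Nonlinear corrections must stay summable relative to the geometric decay, which requires careful sector widths and explicit margin constants.
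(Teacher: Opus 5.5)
Your Steps 1--3, the norm argument and the transfer to a uniform start follow the paper's route. Two small slips do not matter. The shared 2-cycle is algebraic, not rational: $p_*$ is a root of a degree-7 polynomial. And one burst is a half-period, contracting by $\sqrt\lambda$ resp.\ $\sqrt\kappa$ rather than $\lambda,\kappa$, which leaves the limiting frequency unchanged.

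The genuine gap is the final deduction in Step 4: ``convergence of $D_t$ to a finite cycle would make the selected column sequence eventually periodic, since the edges separate strictly at the limit points.'' This is false for this construction, and no certified margin can repair it. The burst-boundary states converge to the four points $Q_{ij}=P_i^A\otimes P_j^B$. At each of them the best $A$-edge and the best $B$-edge both equal exactly $\mu_*=2p_*-1$, because both gadgets sit on the same orbit and the first selected edge in either gadget is exactly $2p-1$. This exact tie is the whole mechanism: the first-round winner is decided by the vanishing deviations $x_n$ versus $s_n$. So the argmax is discontinuous precisely at the limit points. Convergence of states along residue classes then says nothing directly about which gadget is selected at the first step of a burst. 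The margins $g_{\rm br}$ and $\Delta_{\rm box}-\delta_{\rm first}^{\max}$ only control the within-gadget choice and steps 2--5 of a burst. They cannot separate $A_1$ from $B_1$.

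The missing idea is to read the winner off the \emph{states}, not the edges. Each $A$-burst toggles the $A$-phase and each $B$-burst toggles the $B$-phase. Moreover $\|W_n-Q_{i_nj_n}\|\to0$, and the four $Q_{ij}$ are pairwise distinct since $d_*\neq\frac14$. Suppose $D_{k\ell+b}\to\widetilde D_b$. Put $W_n=D_{5n+1}$ and $L=\ell/\gcd(5,\ell)$. Then each subsequence $W_{kL+r}$ converges, so it eventually stays within $\delta/4$ of a single $Q_{ij}$. Hence the phase pair $(i_n,j_n)$ is eventually periodic. The winner sequence is determined by which coordinate toggles, so it is eventually periodic too, contradicting the irrational frequency. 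Equivalently, you could recover the selected gadget from consecutive limit states, since updating $Q_{ij}$ by $A_1$ or by $B_1$ gives different states. Strict edge separation at the limit, however, is simply not available.
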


\begin{corollary}
\label{cor:negation}
There exists a finite $\{-1,+1\}$-valued matrix $\widetilde M'$ whose column set is closed under negation (i.e., for every column $h$ of $\widetilde M'$, $-h$ also appears as a column of $\widetilde M'$), with weak-learning margin
\(
\gamma(\widetilde M')=\frac15>0,
\)
such that standard exhaustive AdaBoost on $\widetilde M'$, started from the exact uniform distribution and using smallest-index tie-breaking, does \textbf{not} converge to any finite cycle. In particular, the COLT question in its original hypothesis-class-closed-under-negation formulation also has a negative answer.
\end{corollary}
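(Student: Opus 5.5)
The plan is to derive Corollary~\ref{cor:negation} from Theorem~\ref{thm:main} by closing $\widetilde M$ under negation. Set $\widetilde M' = [\,\widetilde M \mid -\widetilde M\,]$, placing the negated columns at indices $N+1,\dots,2N$ after the original $N$ columns. Under this ordering, smallest-index tie-breaking prefers an original column over its negation whenever both attain the maximum. The column set of $\widetilde M'$ is then closed under negation by construction. The proof consists of three steps: (i) the margin is unchanged, (ii) the AdaBoost trajectories on $\widetilde M$ and $\widetilde M'$ from the uniform start coincide, and (iii) the no-cycle conclusion transfers.

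\emph{Margin.} For every $D$ we have
\[
\max_{j\le 2N}(D^\top \widetilde M')_j=\max_{j\le N}|(D^\top\widetilde M)_j|\ge \max_{j\le N}(D^\top\widetilde M)_j .
\]
Hence $\gamma(\widetilde M')\ge\gamma(\widetilde M)=\frac15$. The reverse inequality fails in general, because the minimizer $D^*$ for $\widetilde M$ might have some column with edge below $-\frac15$. If it does, I would either appeal to the explicit structure of $\widetilde M$ from Section~\ref{sec:start}, checking by an exact linear program that $\min_D\max_j|(D^\top \widetilde M)_j|=\frac15$, or adjust the construction so that this holds. A cheap sufficient condition is that the optimal $D^*$ for $\widetilde M$ has all edges in $[-\frac15,\frac15]$. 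I expect this margin equality to be the main obstacle, since it is the only step that is not formal; the fallback is the exact rational LP certificate, in the same style as the paper's other certificates.

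\emph{Trajectories coincide.} Argue by induction on $t$ that the $D_t$ agree on the two matrices. Theorem~\ref{thm:main} has weak-learning margin $\frac15$, so at every step $\max_{j\le N}\mu_t(j)\ge\gamma(\widetilde M)=\frac15>0$. The state $D_t$ lies in the simplex, so the edges of the negated columns are $-\mu_t(j)$. Any negated column attaining the overall maximum would therefore need $|\mu_t(j)|\ge \max_{k\le N}\mu_t(k)$ with $\mu_t(j)<0$. Thus a negated column can win, or tie for the win, only when some edge on $\widetilde M$ is at most $-\max_k\mu_t(k)$.

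To exclude this I have two options. The first is to show from the certified sectors of Sections~\ref{sec:halfstep}--\ref{sec:locking} that all edges along the trajectory exceed $-\frac15$. The second, more robust, option uses the standard AdaBoost identity that after selecting $j_t$ with edge $r_t$, the edge of $j_t$ at $D_{t+1}$ is exactly $0$. This alone does not control the other columns, so I would primarily use the first option, verifying with the same interval-arithmetic certificates that $\min_j\mu_t(j)>-\max_j\mu_t(j)$ throughout the invariant sector and for the finite transient.

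Once the induction holds, the selected hypotheses, and hence the updates, are identical on both matrices. The sequence $\{D_t\}$ on $\widetilde M'$ is then exactly the non-cycling sequence from Theorem~\ref{thm:main}, which proves the corollary.
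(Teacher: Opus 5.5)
Your plan is the paper's proof: the same $[\widetilde M\mid -\widetilde M]$ construction, the same bound $\gamma(\widetilde M')\ge\gamma(\widetilde M)$, and the same certificate-based check. That check reduces to $M_0$ via Lemma~\ref{lem:duplication} and verifies, at all five in-burst steps and for both the active and the frozen gadget's columns, that the selected edge strictly exceeds every negated edge; the paper certifies this with gap at least $0.274$.

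The margin step you flag as the main obstacle is immediate. Take $u_*=(1/5,2/5,1/5,1/5)$ from Section~\ref{sec:margin} on $L_A$ and $(u_*,0)$ on $L_B$. Their product lifts to $\widetilde M$ because every row class is nonempty, and at this distribution all eight edges equal exactly $1/5$. So every negated edge is $-1/5$ and $\gamma(\widetilde M')\le 1/5$. No LP and no change to the construction is needed.
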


The proof of the Corollary appears at the end of Section~\ref{sec:start}.

\medskip

The remainder of the paper is devoted to the proof of Theorem~\ref{thm:main}, which occupies Sections~\ref{sec:gadgets}--\ref{sec:start}.

Our work comes with 
an accompanying symbolic certificate using only symbolic algebra over $\mathbb{Q}$ (via ~\cite{meurer2017sympy}), exact interval arithmetic, and exact square-root enclosures obtained by integer arithmetic.
In the presentation of this paper, decimal intervals are used only for reporting.

\section{The gadgets, the product system, and the exact invariant affine manifolds}
\label{sec:gadgets}

\begin{figure}[t]
\centering
\begin{tikzpicture}[
  >=Latex,
  font=\small,
  rowlab/.style={font=\scriptsize, anchor=east},
  collab/.style={font=\scriptsize},
  Atext/.style={text=blue!60!black},
  Btext/.style={text=orange!75!black}
]
  \node (LA) at (-5.2,4.9) {$
    L_A=
    \left(\begin{array}{rrrr}
      1&1&-1&-1\\
      -1&1&1&1\\
      1&-1&-1&1\\
      1&-1&1&-1
    \end{array}\right)
  $};

  \node (LB) at (5.2,4.9) {$
    L_B=
    \left(\begin{array}{rrrr}
      1&1&-1&-1\\
      -1&1&1&1\\
      1&-1&-1&1\\
      1&-1&1&-1\\
      1&-1&1&1
    \end{array}\right)
  $};

  \node[font=\normalsize] at (0,2.9) {$M_0=L_A\boxplus L_B$};

  \def\cw{0.78}
  \def\rh{0.36}

  \begin{scope}[shift={(-3.12,1.7)}]
    \fill[blue!8] (0,0) rectangle ({4*\cw},{-20*\rh});
    \fill[orange!12] ({4*\cw},0) rectangle ({8*\cw},{-20*\rh});

    \draw[thick] (0,0) rectangle ({8*\cw},{-20*\rh});
    \foreach \x in {1,2,3,5,6,7}
      \draw[gray!55] ({\x*\cw},0) -- ({\x*\cw},{-20*\rh});
    \draw[thick] ({4*\cw},0) -- ({4*\cw},{-20*\rh});

    \foreach \y in {1,...,19}
      \draw[gray!35] (0,{-\y*\rh}) -- ({8*\cw},{-\y*\rh});
    \foreach \y in {5,10,15}
      \draw[thick] (0,{-\y*\rh}) -- ({8*\cw},{-\y*\rh});

    \node[collab,anchor=east] at (-0.18,0.45) {$(r,s)$};
    \foreach \j/\x in {1/0.5,2/1.5,3/2.5,4/3.5}
      \node[collab] at ({\x*\cw},0.45) {$A_{\!\j}$};
    \foreach \j/\x in {1/4.5,2/5.5,3/6.5,4/7.5}
      \node[collab] at ({\x*\cw},0.45) {$B_{\!\j}$};

    \node[Atext,collab] at ({2*\cw},0.85) {A-block};
    \node[Btext,collab] at ({6*\cw},0.85) {B-block};

    \foreach \rr in {1,2,3,4}{
      \foreach \ss in {1,2,3,4,5}{
        \node[rowlab] at (-0.18,{-(5*(\rr-1)+\ss-0.5)*\rh}) {$({\rr},{\ss})$};
        \node[Btext,font=\tiny] at ({6*\cw},{-(5*(\rr-1)+\ss-0.5)*\rh}) {$L_B(\ss,:)$};
      }
      \node[Atext,align=center,font=\scriptsize]
        at ({2*\cw},{-(5*(\rr-1)+2.5)*\rh})
        {row $\rr$ of $L_A$\\[-1pt]repeated in $s$};
    }
  \end{scope}

\draw[->,very thick,blue!60!black]
  (LA.south) to[out=-50, in=130] (-1.80,2.85);
\draw[->,very thick,orange!75!black]
  (LB.south) to[out=-130, in=50] (1.80,2.85);

  \node[align=center] at (0,-6.35) {$
    M_0((r,s),A_j)=(L_A)_{rj},
    \qquad
    M_0((r,s),B_j)=(L_B)_{sj}.
  $};
\end{tikzpicture}
\caption{Schematic construction of $M_0=L_A\boxplus L_B$. Rows are indexed by $(r,s)\in\{1,2,3,4\}\times\{1,2,3,4,5\}$. The first four columns are copied from row $r$ of $L_A$, while the last four columns are copied from row $s$ of $L_B$.}
\label{fig:m0-boxplus}
\end{figure}

The counterexample is built from two small sign matrices, 
which we call \emph{gadgets}. Let
\[
L_A=
\begin{pmatrix}
1&1&-1&-1\\
-1&1&1&1\\
1&-1&-1&1\\
1&-1&1&-1
\end{pmatrix},
\qquad
L_B=
\begin{pmatrix}
1&1&-1&-1\\
-1&1&1&1\\
1&-1&-1&1\\
1&-1&1&-1\\
1&-1&1&1
\end{pmatrix}.
\]

Define the $20\times 8$ block-product matrix $M_0=L_A\boxplus L_B$ as follows (see Figure \ref{fig:m0-boxplus}). 
Its rows are indexed by pairs
\(
(r,s)\in\mathcal{I}:=\{1,2,3,4\}\times\{1,2,3,4,5\},
\)
and its columns are denoted
\(
A_1,\dots,A_4,B_1,\dots,B_4.
\)
The entries are 
\[
M_0\bigl((r,s),A_j\bigr)=(L_A)_{rj},
\qquad
M_0\bigl((r,s),B_j\bigr)=(L_B)_{sj}.
\]

\begin{lemma}[Reduction to row marginals]
\label{lem:marginals}
Let $W=(w_{rs})_{(r,s)\in \mathcal{I}}$ be any distribution on the $20$ rows of $M_0$, and define its row marginals
\[
a_r=\sum_{s=1}^5 w_{rs},
\qquad
b_s=\sum_{r=1}^4 w_{rs}.
\]
Then for
the \emph{edge vector} $W^\top M_0\in\mathbb{R}^8$, whose $j$th component $(W^\top M_0)_j=\sum_{(r,s)}w_{rs}(M_0)_{(r,s),j}$ is the 
inner product of $W$ with column $j$, 
the A-block edges depend only on $a$, and the B-block edges depend only on $b$: 
\begin{align*}
(W^\top M_0)_{A_j}&=(a^\top L_A)_j\qquad(1\le j\le 4), \\
(W^\top M_0)_{B_j}&=(b^\top L_B)_j\qquad(1\le j\le 4).
\end{align*}
\end{lemma}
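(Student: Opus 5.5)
The plan is a direct computation: expand the edge as a double sum over the index set $\mathcal{I}=\{1,\dots,4\}\times\{1,\dots,5\}$ and use the fact that each column of $M_0$ depends on only one coordinate of the row index. No earlier result is needed beyond the definition of $M_0=L_A\boxplus L_B$ and of the marginals $a$ and $b$.

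For an A-column, fix $1\le j\le 4$ and write
\[
(W^\top M_0)_{A_j}=\sum_{r=1}^4\sum_{s=1}^5 w_{rs}\,M_0\bigl((r,s),A_j\bigr)=\sum_{r=1}^4\sum_{s=1}^5 w_{rs}\,(L_A)_{rj}.
\]
Since $(L_A)_{rj}$ does not depend on $s$, it can be pulled out of the inner sum. This gives $\sum_{r}(L_A)_{rj}\sum_s w_{rs}=\sum_r a_r (L_A)_{rj}$, which is exactly $(a^\top L_A)_j$.

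The B-columns are handled symmetrically. We have $M_0((r,s),B_j)=(L_B)_{sj}$, so we interchange the order of summation (the sums are finite) and sum over $r$ first. This yields $\sum_s (L_B)_{sj}\sum_r w_{rs}=\sum_s b_s(L_B)_{sj}=(b^\top L_B)_j$.

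There is no real obstacle here; the argument is a finite Fubini interchange of sums. The only point to check is that the indexing conventions match the definition, namely that the A-entries depend on $r$ alone and the B-entries on $s$ alone. Nonnegativity and normalization of $W$ play no role, so the identity in fact holds for any real weight vector $W$.
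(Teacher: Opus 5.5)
Your proof is correct and follows the paper's argument: expand the double sum, pull the $s$-independent factor $(L_A)_{rj}$ out of the inner sum to obtain $a_r$, and handle the B-block symmetrically by summing over $r$ first, which the paper leaves as ``similar.'' Your added remark that the identity holds for any real weight vector $W$ is also correct, since nonnegativity and normalization are never used.
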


\begin{proof}
For $1\le j\le 4$,
\[
(W^\top M_0)_{A_j}
=\sum_{r=1}^4\sum_{s=1}^5 w_{rs}(L_A)_{rj}
=\sum_{r=1}^4 a_r (L_A)_{rj}
=(a^\top L_A)_j.
\]
The B-block identity is similar.
\end{proof}

\begin{lemma}[Exact invariance of the product manifold]
\label{lem:invariance}
Let
\[
\mathcal P=\{a\otimes b:\ a\in\Delta^3,\ b\in\Delta^4\}.
\]
If $W=a\otimes b\in\mathcal P$ and AdaBoost selects an A-column $A_j$, then the updated state is
\(
W'=a'\otimes b,
\)
where $a'$ is the one-step exhaustive-AdaBoost update of $a$ on $L_A$ using column $j$. If AdaBoost selects a B-column $B_j$, then
\(
W'=a\otimes b',
\)
where $b'$ is the one-step update of $b$ on $L_B$ using column $j$.
Hence $\mathcal P$ is \emph{invariant} under the AdaBoost dynamics on $M_0$: if $W\in\mathcal P$, then the updated state $W'\in\mathcal P$.
\end{lemma}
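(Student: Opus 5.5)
The plan is to compute the update directly in coordinates, using Lemma~\ref{lem:marginals} to identify the edge and the selected column.

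Let $W=a\otimes b$, so $w_{rs}=a_rb_s$. Summing over the other index gives marginals exactly $a$ and $b$, because $b$ and $a$ each sum to $1$. By Lemma~\ref{lem:marginals}, the edge vector of $W$ on $M_0$ is the concatenation of $a^\top L_A$ and $b^\top L_B$. Hence the selected column and its edge $r_t$ are determined by $a$ and $b$ together. Suppose the selected column is $A_j$. Its edge is $r=(a^\top L_A)_j$, which is exactly the edge the one-step $L_A$ update would use with column $j$.

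Now apply the update rule. Since $M_0((r,s),A_j)=(L_A)_{rj}$ does not depend on $s$,
\[
w'_{rs}=\frac{a_rb_s}{1+r(L_A)_{rj}}=\Bigl(\frac{a_r}{1+r(L_A)_{rj}}\Bigr)b_s=a'_r\,b_s,
\]
where $a'$ is the one-step AdaBoost update of $a$ on $L_A$ using column $j$ and edge $r$. The only point that needs comment is normalization. The AdaBoost formula normalizes automatically: for any distribution $a$ and column $j$ with edge $r=(a^\top L_A)_j$, the entries with $(L_A)_{rj}=\pm1$ carry total mass $(1\pm r)/2$. Dividing each by $1\pm r$ gives total mass $1/2+1/2=1$, provided $|r|<1$. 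So $a'\in\Delta^3$ and $W'=a'\otimes b$ is itself a distribution, with no extra normalizing constant.

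The B-column case is symmetric, using $M_0((r,s),B_j)=(L_B)_{sj}$, which does not depend on $r$. In both cases $W'\in\mathcal P$, which proves invariance.

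The only potential obstacle is the degenerate case $|r_t|=1$, where the update is undefined. The statement implicitly assumes the AdaBoost step is defined, that is, $r_t<1$. I would note that this assumption carries over identically to the factor update, so it causes no additional difficulty.
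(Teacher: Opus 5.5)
Your proof is correct and follows essentially the same route as the paper: you identify the selected edge as $(a^\top L_A)_j$ via Lemma~\ref{lem:marginals}, then factor $w'_{rs}=a'_r b_s$ because the A-column entry depends only on $r$. Your normalization remark is a harmless addition, since the paper leaves self-normalization implicit (cf.\ Lemma~\ref{lem:landing}). One cosmetic point: you use $r$ both as the row index and as the edge, so write the edge as $\mu$ to avoid the clash in $1+r(L_A)_{rj}$.
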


\begin{proof}
Assume an A-column $j$ is selected. By Lemma~\ref{lem:marginals}, the selected edge is
\(
\mu=(a^\top L_A)_j.
\)
Therefore
\[
w'_{rs}=\frac{a_r b_s}{1+\mu \cdot (L_A)_{rj}}
=\left(\frac{a_r}{1+\mu \cdot (L_A)_{rj}}\right)b_s
=a_r' b_s,
\]
where 
$a'$ is the one-step AdaBoost update of $a$ on $L_A$. The B-column case is identical.
\end{proof}

We will show  in Section~\ref{sec:locking}
that 
the product dynamics over $\mathcal P$
decouples
into alternating blocks of five consecutive AdaBoost steps in which the same gadget is selected at every step. 
We call each such block of five steps a \emph{burst}. Within a burst, the column-selection sequence on the active gadget is one of two possible 5-tuples, called \emph{half-words},
\begin{equation*}
H_0=(1,3,4,1,2),
\qquad
H_1=(1,4,3,1,2).
\end{equation*}
The choice between these is
determined by the \emph{phase}: if the active gadget's current state is near the first point $\hat P_0^G$ of a certain 2-cycle (Section~\ref{sec:2cycle}), we say it is in phase~$0$; if near $\hat P_1^G$, in phase~$1$.

Since both $H_0$ and $H_1$ end in column~$2$, and column~$2$ of $L_A$ is $(1,1,-1,-1)^\top$, 
the classical balancing lemma for AdaBoost (see \cite{SF2012} and Section \ref{aux})
implies that after the final step of any A-burst,
the total mass on the $+1$ rows (rows $1,2$) and on the $-1$ rows (rows $3,4$) is each exactly $\tfrac12$. Writing the distribution as $(a,b,c,d)\in\Delta^3$, this means $a+b=\tfrac12$ and $c+d=\tfrac12$. 
Hence every 5-step A-burst lands in
\[
\Sigma_A=\{(a,b,c,d)\in\Delta^3:\ a+b=\tfrac12,\ c+d=\tfrac12\}.
\]
Similarly, column~$2$ of $L_B$ is $(1,1,-1,-1,-1)^\top$, so Lemma~\ref{lem:landing} gives $a+b=\tfrac12$ and $c+d+e=\tfrac12$. Every 5-step B-burst lands in
\[
\Sigma_B=\{(a,b,c,d,e)\in\Delta^4:\ a+b=\tfrac12,\ c+d+e=\tfrac12\}.
\]
Thus, for some $p,d$, we can represent 
\begin{align*}
(a,b,c,d)&=\Bigl(p-\tfrac12,\ 1-p,\ \tfrac12-d,\ d\Bigr)\in\Sigma_A, \\
(a,b,c,d,e)&=\Bigl(p-\tfrac12,\ 1-p,\ \tfrac12-d-e,\ d,\ e\Bigr)\in\Sigma_B.
\end{align*}

For each gadget $G\in\{A,B\}$ and each half-word $H_i$, composing the five consecutive AdaBoost update steps prescribed by $H_i$ yields a rational map $\Sigma_G\to\Sigma_G$ called the \emph{branch map} $F_i^G$ (explicit formulas are given in Section~\ref{sec:branch}). Item~2 of the certificate (Appendix~\ref{app:certificate}) verifies symbolically that each numerator polynomial of the 
last 
coordinate of $F_i^B$ is divisible, as a polynomial in $\mathbb{Q}[p,d,e]$, by the variable~$e$. Consequently, the \emph{face} $\{e=0\}\subset\Sigma_B$---i.e., the subset of $\Sigma_B$ obtained by setting the fifth simplicial coordinate to zero---is invariant under both branch maps.
 By inspection, 
restriction to this face recovers the A-gadget maps:

\[
F_0^B|_{e=0}=F_0^A,
\qquad
F_1^B|_{e=0}=F_1^A.
\]

\textbf{Notation convention.} Throughout, we distinguish \emph{affine chart points} from \emph{simplex probability vectors} using a 
hat. We write
\[
\hat P_i^A=(p_i,d_i)\in\mathbb R^2,
\qquad
\hat P_i^B=(p_i,d_i,e_i)\in\mathbb R^3
\]
for chart coordinates on $\Sigma_A$ and $\Sigma_B$, and
\(
P_i^A\in\Delta^3,
\,
P_i^B\in\Delta^4
\)
for the corresponding simplex probability vectors obtained from the affine parametrizations above. Similarly, at burst boundary $n$ (i.e., the state immediately before the $n$th burst begins), we write $\hat a_n,\hat b_n$ for chart coordinates and $a_n,b_n$ for the simplex vectors. We define $W_n=a_n\otimes b_n\in\mathcal P$ to be the product distribution on the 20 rows of $M_0$ at the $n$th burst boundary.

\section{Margin}
\label{sec:margin}

We show using a single linear witness that the weak-learning margin of the block-product matrix $M_0$ is exactly $1/5$. Let $\beta=(2,1,1,1)^\top\in\mathbb{R}^{1\times 4}$. Every row $r$ of $L_A$ satisfies $r\beta=1$, and every row $r$ of $L_B$ satisfies $r\beta\in\{1,3\}$. 
For any gadget $G\in\{A,B\}$ and any distribution $u$ on the rows of $L_G$, write $\mu_j=(u^\top L_G)_j$ for the edge of column~$j$.
Then
\[
2\mu_1+\mu_2+\mu_3+\mu_4=u^\top L_G\,\beta\ge 1,
\]
so $\max_j\mu_j\ge 1/5$. Hence
\(
\gamma(L_A),\gamma(L_B)\ge \frac15.
\)

Equality for $L_A$ is attained at
\[
u_*=(1/5,2/5,1/5,1/5),
\qquad
u_*^\top L_A=(1/5,1/5,1/5,1/5).
\]
It follows that $\gamma(L_A)=1/5$. Since the first four rows of $L_B$ coincide with $L_A$, the same
argument gives $\gamma(L_B)\le 1/5$, and together with the lower bound above,
\(
\gamma(L_A)=\gamma(L_B)=\frac15.
\)

By Lemma~\ref{lem:marginals}, for any distribution $W$ on the rows of $M_0$ with marginals $(a,b)$,
\[
\max_{1\le j\le 8}(W^\top M_0)_j
=
\max\Bigl(\max_{1\le j\le 4}(a^\top L_A)_j,\ \max_{1\le j\le 4}(b^\top L_B)_j\Bigr).
\]
Conversely, every pair of marginals $(a,b)$ is realized by the product distribution $a\otimes b$. 
Hence
\[\gamma(M_0)
=\min_W \max_j (W^\top M_0)_j
=\min_{a,b}\max\Bigl(\max_j(a^\top L_A)_j,\ \max_j(b^\top L_B)_j\Bigr)
=\frac15.\]

\section{Exact 5-step branch maps}
\label{sec:branch}

We now compute the closed-form rational maps that result from composing five consecutive AdaBoost updates along each half-word. The two branch maps $F_0^G$ and $F_1^G$ for each gadget $G\in\{A,B\}$ are related by a simple involution. Define the \emph{phase-swap} maps
\[
S_A(p,d)=\bigl(p,\tfrac12-d\bigr),
\qquad
S_B(p,d,e)=\bigl(p,\tfrac12-d-e,\ e\bigr).
\]
Each $S_G$ is an involution ($S_G^2=\mathrm{id}$). 
Using simple algebra,
one can check that 
in simplex coordinates, 
both maps $S_G$ \emph{transpose the 3rd and 4th distribution entries} 
while leaving the rest fixed. 

For either gadget, rows~$3$ and~$4$ of $L_G$ differ only in columns~$3$ and~$4$ (where their signs are opposite), so this row transposition induces the column transposition $\pi=(3\ 4)$ on the edge vector. If $T_j^G$ denotes the one-step AdaBoost update using column~$j$ of~$L_G$, then $S_G\circ T_j^G\circ S_G=T_{\pi(j)}^G$. Because $S_G^2=\mathrm{id}$, conjugating the five-step composition $F_0^G$ by $S_G$ replaces each column index $j$ with $\pi(j)$, transforming $H_0=(1,3,4,1,2)$ into $H_1=(1,4,3,1,2)$. Hence,
\[
F_1^A=S_A\circ F_0^A\circ S_A,
\qquad
F_1^B=S_B\circ F_0^B\circ S_B.
\]
It therefore suffices to give explicit formulas for $F_0^A$ and $F_0^B$.

\subsection*{A-gadget}
To obtain $F_0^A$, we substitute the chart parametrization $(p-\tfrac12,\,1-p,\,\tfrac12-d,\,d)\in\Sigma_A$ into the five successive AdaBoost update steps $a_i\mapsto a_i/(1+\mu\,h_i)$ for the columns of $L_A$ prescribed by $H_0=(1,3,4,1,2)$, then express the final result in the chart coordinates $(p,d)$ and reduce to lowest terms. 
The phase-0 branch map on $\Sigma_A$ is the rational function $F_0^A$ such that for all chart coordinates $p,d$,
\[
F_0^A(p,d)=\left(\frac{P_A(p,d)}{Q_A(p,d)},\frac{D_A(p,d)}{R_A(p,d)}\right),
\]
where
\begin{align*}
P_A={}&
4 d^{4} - 6 d^{3} p + 2 d^{3} - 30 d^{2} p^{2} + 15 d^{2} p - 2 d^{2}
- 6 d p^{3} + 14 d p^{2} - 4 d p + 6 p^{4} + p^{3} - 2 p^{2},
\end{align*}
\begin{align*}
Q_A={}&
8 d^{4} - 8 d^{3} p + 2 d^{3} - 48 d^{2} p^{2} + 22 d^{2} p - 3 d^{2}
- 8 d p^{3} + 22 d p^{2} - 6 d p + 8 p^{4} + 2 p^{3} - 3 p^{2},
\end{align*}
\[
D_A=
d\bigl(2 d^{3} + 2 d^{2} p - d^{2} - 6 d p^{2} + 2 p^{3} + p^{2}\bigr),
\]
and
\begin{align*}
R_A={}&
8 d^{4} - 2 d^{3} - 24 d^{2} p^{2} + 6 d^{2} p - d^{2}
+ 10 d p^{2} - 2 d p + 2 p^{3} - p^{2}.
\end{align*}

\subsection*{B-gadget}

The same five-step composition applied to $L_B$ along $H_0$, in the chart coordinates $(p,d,e)$ on $\Sigma_B$, gives the phase-0 branch map on $\Sigma_B$ as
\[
F_0^B(p,d,e)=\left(\frac{P_B(p,d,e)}{Q_B(p,d,e)},\frac{D_B(p,d,e)}{R_B(p,d,e)},\frac{E_B(p,d,e)}{R_B(p,d,e)}\right),
\]
where
\begin{align*}
P_B={}&4 d^{4} + 16 d^{3} e - 6 d^{3} p + 2 d^{3}
+ 20 d^{2} e^{2} - 36 d^{2} e p + 12 d^{2} e - 30 d^{2} p^{2} + 15 d^{2} p - 2 d^{2} \notag\\
&+ 8 d e^{3} - 46 d e^{2} p + 18 d e^{2} - 48 d e p^{2} + 30 d e p - 4 d e
- 6 d p^{3} + 14 d p^{2} - 4 d p \notag\\
&- 16 e^{3} p + 8 e^{3} - 22 e^{2} p^{2} + 15 e^{2} p - 2 e^{2}
+ 8 e p^{2} - 4 e p + 6 p^{4} + p^{3} - 2 p^{2},
\end{align*}
\begin{align*}
Q_B={}&8 d^{4} + 32 d^{3} e - 8 d^{3} p + 2 d^{3}
+ 40 d^{2} e^{2} - 56 d^{2} e p + 16 d^{2} e - 48 d^{2} p^{2} + 22 d^{2} p - 3 d^{2} \notag\\
&+ 16 d e^{3} - 72 d e^{2} p + 26 d e^{2} - 72 d e p^{2} + 44 d e p - 6 d e
- 8 d p^{3} + 22 d p^{2} - 6 d p \notag\\
&- 24 e^{3} p + 12 e^{3} - 32 e^{2} p^{2} + 22 e^{2} p - 3 e^{2}
+ 12 e p^{2} - 6 e p + 8 p^{4} + 2 p^{3} - 3 p^{2},
\end{align*}
\[
D_B=
d\bigl(
2 d^{3} + 8 d^{2} e + 2 d^{2} p - d^{2}
+ 10 d e^{2} - 2 d e - 6 d p^{2}
+ 4 e^{3} - 2 e^{2} p - e^{2} - 4 e p^{2}
+ 2 p^{3} + p^{2}
\bigr),
\]
\begin{align*}
R_B={}&8 d^{4} + 32 d^{3} e - 2 d^{3}
+ 40 d^{2} e^{2} - 24 d^{2} e p - 24 d^{2} p^{2} + 6 d^{2} p - d^{2} \notag\\
&+ 16 d e^{3} - 32 d e^{2} p + 6 d e^{2} - 24 d e p^{2} + 12 d e p - 2 d e
+ 10 d p^{2} - 2 d p \notag\\
&- 8 e^{3} p + 4 e^{3} - 8 e^{2} p^{2} + 6 e^{2} p - e^{2}
+ 4 e p^{2} - 2 e p + 2 p^{3} - p^{2},
\end{align*}
and
\[
E_B=
e\bigl(
2 d^{3} + 4 d^{2} e - 6 d^{2} p + d^{2}
+ 2 d e^{2} - 8 d e p + 2 d e + 2 d p^{2}
- 2 e^{2} p + e^{2} + 2 p^{3} - p^{2}
\bigr).
\]

Item~1 of the certificate (Appendix~\ref{app:certificate}) verifies symbolically that the displayed polynomials for both gadgets match the five-step compositions term-for-term. 
Because $E_B$ has a factor $e$, the face $e=0$ is invariant.

\begin{lemma}[Regularity of the branch maps]
\label{lem:regularity}
The denominators $Q_A, R_A$ are nonzero on $\Sigma_A^\circ := \{(p,d): \tfrac12 < p < 1,\ 0 < d < \tfrac12\}$, and $Q_B, R_B$ are nonzero on $\Sigma_B^\circ = \{(p,d,e): \tfrac12 < p < 1,\ d > 0,\ e > 0,\ d + e < \tfrac12\}$. 
Moreover, the branch maps $F_i^A, F_i^B$ and all their derivatives are smooth on these domains.
\end{lemma}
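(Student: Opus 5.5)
Rather than checking the displayed quartics for positivity directly, I would argue structurally from the way the branch maps are built. On $\Sigma_A^\circ$ the chart point corresponds to a simplex vector $(p-\tfrac12,\,1-p,\,\tfrac12-d,\,d)$ with all four entries strictly positive; on $\Sigma_B^\circ$ the five entries $(p-\tfrac12,\,1-p,\,\tfrac12-d-e,\,d,\,e)$ are likewise strictly positive. A single AdaBoost step $a_i\mapsto a_i/(1+\mu h_i)$ uses $\mu=(a^\top L_G)_j$ for the prescribed column. If every entry of $a$ is positive and column $j$ has both signs, then $|\mu|<1$. Every column of $L_A$ and $L_B$ has both signs. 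Hence all factors $1\pm\mu$ are strictly positive, and the updated vector again has all entries strictly positive and sums to $1$. Inducting over the five steps of $H_0$ (and of $H_1$, via the conjugation by $S_G$), the composed map is a well-defined real-analytic map on these open sets. It is a finite composition of rational maps whose intermediate denominators are products of positive factors $1\pm\mu$.

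Next I would relate this to $Q_A,R_A,Q_B,R_B$. The output in chart coordinates is $p'=a'_1+\tfrac12$ (or $1-a'_2$) and $d'=a'_4$, $e'=a'_5$. These are rational functions whose unreduced denominators are products of the linear/affine factors $1\pm\mu_k$, with $\mu_k$ themselves rational in $(p,d)$. The key step is to show that the displayed denominators are, up to a nonzero constant and up to factors that cancel, products of these positive quantities. For example, $R_A$ should equal a constant times a product of step denominators, cleared of common factors. Concretely, I would invoke Item~1 of the certificate, which says the displayed fractions equal the five-step composition in lowest terms. Then I would argue that if $Q_A$ vanished at some interior point $x_0$, the reduced fraction $P_A/Q_A$ would either have a pole there or $P_A$ would also vanish. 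A pole contradicts the boundedness of $p'\in(\tfrac12,1)$ near $x_0$. A common zero of a coprime pair $P_A,Q_A$ can occur in two variables at isolated points, so to exclude it I would rather argue directly that $Q_A$ is (a constant multiple of) a divisor of the product of step denominators. That product is positive on the interior, so a polynomial factor of it cannot vanish there unless the cofactor has a pole. Since the cofactor is polynomial, such a pole is impossible.

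That divisibility is the main obstacle. I would first try to verify it symbolically: factor $Q_A,R_A,Q_B,R_B$ over $\mathbb{Q}$ and match each irreducible factor with a cleared step denominator $1\pm\mu_k$ expressed as a polynomial in $(p,d,e)$. This is a finite exact computation of the type already in the certificate. If the factor matching is messy, the fallback is a direct certified positivity check: interval arithmetic with subdivision showing $Q_A<0$ (or $>0$) on the closed box, refined near the boundary. Either way, nonvanishing follows. Smoothness of $F_i^G$ and all their derivatives then follows because they are rational functions with nonvanishing denominators on open sets, and $F_1^G=S_G\circ F_0^G\circ S_G$ with $S_G$ an affine involution preserving $\Sigma_G^\circ$.
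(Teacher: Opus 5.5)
Your setup matches the paper's proof. Positivity of both sign-group sums gives $|\mu|<1$, so every step denominator is positive and the five-step composition is well defined on the open domain. Nonvanishing of $Q_A,R_A,Q_B,R_B$ is then meant to come from their dividing a positive unreduced denominator.

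\textbf{The gap.} It is exactly the step you call the main obstacle. You never prove the divisibility; you defer it to a factor-matching computation you do not carry out. No computation is needed:
\begin{itemize}
\item Clear fractions step by step. Write each intermediate coordinate as a positive polynomial over a common positive polynomial denominator. Then each output chart coordinate is $\tilde P/\tilde Q$ with $\tilde Q$ a polynomial built from cleared step denominators, so $\tilde Q>0$ on $\Sigma_A^\circ$.
\item Since $P_A/Q_A=\tilde P/\tilde Q$ in $\mathbb{Q}(p,d)$, you get $P_A\tilde Q=Q_A\tilde P$, so $Q_A\mid P_A\tilde Q$.
\item The displayed fraction is in lowest terms, so $\gcd(P_A,Q_A)=1$. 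Euclid's lemma in the UFD $\mathbb{Q}[p,d]$ then gives $Q_A\mid\tilde Q$.
\item A real zero of $Q_A$ in $\Sigma_A^\circ$ would be a zero of $\tilde Q$, which is impossible.
\end{itemize}
The same argument works for $R_A$, and for $Q_B,R_B$ in $\mathbb{Q}[p,d,e]$. This is precisely the paper's argument. It uses only the lowest-terms property from Item~1, and it also settles the isolated-common-zero worry you correctly raised about the pole argument.

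\textbf{The fallback fails.} Your interval fallback would not work as stated, because the domain is open and the denominators vanish on its boundary. For instance,
\[
Q_A(\tfrac12,d)=R_A(\tfrac12,d)=d\bigl(8d^3-2d^2-4d+\tfrac32\bigr),
\]
which is zero at $d=0$ and at $d=\tfrac12$. A subdivision refined toward these corners never terminates with a strict-sign certificate. So interval arithmetic alone cannot prove nonvanishing on all of $\Sigma_A^\circ$.
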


\begin{proof}
Observe that every column of $L_A$ contains both $+1$ and $-1$ entries.

\textbf{Strict positivity implies $|\mu|<1$.} If $a\in\Delta^{m-1}$ is strictly positive and $h\in\{-1,+1\}^m$ has both signs, then
\[
\mu=a^\top h=\sum_{h_i=+1}a_i-\sum_{h_i=-1}a_i.
\]
Both partial sums are strictly positive, so $-1<\mu<1$. Hence every denominator $1+\mu h_i\in\{1+\mu,\,1-\mu\}$ is strictly positive, and the updated vector $a'_i=a_i/(1+\mu h_i)$ is again strictly positive with $\sum_i a'_i=1$.

\textbf{Well-definedness of the five-step composition.} A point $(p,d)\in\Sigma_A^\circ$ corresponds to the strictly positive distribution $(p-\tfrac12,\,1-p,\,\tfrac12-d,\,d)\in\Delta^3$. 
By the above reasoning, 
each of the five individual AdaBoost updates in the column sequence $H_0$ (or $H_1$) has strictly positive denominators and produces a strictly positive output distribution. Hence the five-step composition is well-defined on all of $\Sigma_A^\circ$. 
An identical argument applies to $L_B$ on $\Sigma_B^\circ$.

\textbf{Reduced denominators are nonzero.} 
Before algebraic simplification, each coordinate of the five-step composition is a rational function $\tilde P/\tilde Q$ whose denominator $\tilde Q$ is a product of the individual-step denominators (polynomials in the chart coordinates obtained by successive substitution and clearing of fractions).
By the above reasoning, 
each factor is strictly positive on $\Sigma_A^\circ$, so $\tilde Q>0$ there. The reduced-form denominator $Q$ satisfies $Q\mid\tilde Q$: since $P\tilde Q=Q\tilde P$ and $\gcd(P,Q)=1$, Euclid's lemma in the polynomial ring $\mathbb{Q}[p,d]$ gives $Q\mid\tilde Q$. Hence any real zero of $Q$ would also be a zero of $\tilde Q$. Since $\tilde Q$ has no zeros on $\Sigma_A^\circ$, neither does $Q$. The same argument applies to $R_A$, $Q_B$, and $R_B$.

Since $F_i^A$ and $F_i^B$ are rational functions with nonzero denominators, they are $C^\infty$ on $\Sigma_A^\circ$ and $\Sigma_B^\circ$ respectively; finishing the proof. 
\end{proof}

\section{Exact common 2-cycle}
\label{sec:2cycle}

We now argue that
both gadgets share an exact period-2 orbit for the 5-step branch maps. By the conjugacy $F_1^A=S_A\circ F_0^A\circ S_A$ (Section~\ref{sec:branch}), if $F_0^A$ maps a point $(p,d)$ to its phase-swap $S_A(p,d)=(p,\tfrac12-d)$, then $F_1^A$ maps the phase-swap back: $F_1^A(p,\tfrac12-d)=S_A(F_0^A(p,d))=(p,d)$, giving a 2-cycle. We therefore seek solutions of
\[
F_0^A(p,d)=\bigl(p,\tfrac12-d\bigr).
\]
Substituting the explicit rational formulas for $F_0^A$ (Section~\ref{sec:branch}), the two coordinate equations $P_A/Q_A=p$ and $D_A/R_A=\frac{1}{2}-d$ become, after cross-multiplying and removing factors that are nonzero on $\Sigma_A^\circ$, the polynomial system $N_1(p,d)=0,$ $N_2(p,d)=0$, where
\begin{align*}
N_1(p,d)={}&
2 d^{4} - 2 d^{3} p + d^{3} - 12 d^{2} p^{2} + 7 d^{2} p - d^{2}
- 2 d p^{3} + 6 d p^{2} - 2 d p + 2 p^{4} - p^{2},
\end{align*}
and
\begin{align*}
N_2(p,d)={}&
- 16 d^{5} + 8 d^{4} + 48 d^{3} p^{2} - 16 d^{3} p + 2 d^{3}\notag\\
&- 32 d^{2} p^{2} + 10 d^{2} p - d^{2}
- 8 d p^{3} + 10 d p^{2} - 2 d p + 2 p^{3} - p^{2}.
\end{align*}
To eliminate $d$ from the system $N_1=N_2=0$, we compute the resultant of $N_1$ and $N_2$ with respect to~$d$. By standard elimination theory~\cite{cox1997ideals}, any common root $(p,d)$ must satisfy $\operatorname{Res}_d(N_1,N_2)(p)=0$. Item~3 of the certificate (Appendix~\ref{app:certificate}) verifies that
\[
\operatorname{Res}_d(N_1,N_2)=4096\,p^{10}(2p-1)^3 f(p),
\]
where
\[
f(p)=1024p^7-3072p^6+3200p^5-1152p^4-144p^3+176p^2-40p+7.
\]
The factors $p^{10}(2p-1)^3$ are nonzero on $\Sigma_A^\circ$ (since $p>\tfrac12$), so any interior solution must satisfy $f(p)=0$.

The certificate 
(Item~3 of Appendix~\ref{app:certificate}) isolates a unique root
\[
p_*\in[0.637116837818846,\ 0.637116837818848],
\]
and proves
\begin{align*}
d_*&=512 p_*^{6} - 1152 p_*^{5} + 736 p_*^{4} - 24 p_*^{3} - 88 p_*^{2} + 19 p_* - \tfrac92 \notag\\
&\in[0.257995382953758869,\ 0.257995382958145447].
\end{align*}

In chart coordinates, set
\[
\hat P_0^A=(p_*,\,d_*),
\qquad
\hat P_1^A=(p_*,\,\tfrac12-d_*).
\]
The corresponding simplex vectors are
\begin{equation}\label{p0a}
P_0^A=\Bigl(p_*-\tfrac12,\ 1-p_*,\ \tfrac12-d_*,\ d_*\Bigr),
\qquad
P_1^A=\Bigl(p_*-\tfrac12,\ 1-p_*,\ d_*,\ \tfrac12-d_*\Bigr).
\end{equation}
Then
\(
F_0^A(\hat P_0^A)=\hat P_1^A,
\,
F_1^A(\hat P_1^A)=\hat P_0^A.
\)
In chart coordinates, set
\[
\hat P_0^B=(p_*,\,d_*,\,0),
\qquad
\hat P_1^B=(p_*,\,\tfrac12-d_*,\,0),
\]
with corresponding simplex vectors
\begin{equation}\label{p0b}
P_0^B=(P_0^A,0),
\qquad
P_1^B=(P_1^A,0).
\end{equation}
Since $F_i^B|_{e=0}=F_i^A$, the same identities hold for B:
\(
F_0^B(\hat P_0^B)=\hat P_1^B,
\,
F_1^B(\hat P_1^B)=\hat P_0^B.
\)

Thus both gadgets share an exact period-2 orbit for the 5-step branch maps, equivalently a period-10 orbit in AdaBoost time. We say gadget $G$ is in \emph{phase~$i$} ($i\in\{0,1\}$) when its state lies in the sector $S_i^G$ (Section~\ref{sec:sectors}); within these sectors, the column-selection sequence is exactly $H_i$ (Section~\ref{sec:locking}), so the 5-step composition is the branch map $F_i^G$.

\section{Exact spectra and the irrationality input}
\label{sec:spectra}

Denote by $J_i^G=D F_i^G(\hat P_i^G)$ the Jacobian of the branch map $F_i^G$ at the 2-cycle point $\hat P_i^G$, and define the \emph{return maps}
\[
R_A=J_1^A J_0^A,
\qquad
R_B=J_1^B J_0^B.
\]
Each $R_G$ is the linearization of one full period (phase~$0\to 1\to 0$) of the 2-cycle for gadget~$G$. We call the binary sequence recording which gadget (A or B) is selected at each successive burst the \emph{burst-winner sequence} (defined precisely in Section~\ref{sec:burst}). 

The key algebraic input of this section is that the dominant eigenvalues of $R_A$ and $R_B$ have an irrational logarithmic ratio, which will ultimately force the burst-winner sequence to be aperiodic.

Next, observe that the return map $R_B$ has a block-triangular structure:
\[
R_B=
\begin{pmatrix}
R_A & u\\
0 & \kappa
\end{pmatrix}.
\]
This block-triangular structure holds because the $e$-component $E_B$ of each branch map factors as $e\cdot g(p,d,e)$, so at $e=0$ the partial derivatives $\partial E_B/\partial p$ and $\partial E_B/\partial d$ vanish. Each $J_i^B|_{e=0}$ therefore has zeros in positions $(3,1)$ and $(3,2)$, and matrix multiplication preserves this pattern. The upper-left $2\times 2$ block equals $R_A$ because $F_i^B|_{e=0}=F_i^A$.

The dominant eigenvalue of $R_A$ is an algebraic number $\lambda$ whose irreducible polynomial over $\mathbb Q$ is needed explicitly to compute the field norm for the irrationality argument below.
 It turns out that this value 
 $\lambda$ is the unique root of the irreducible degree-14 polynomial
\begin{align*}
P_\lambda(x)={}&16 x^{14} - 117232 x^{13} - 8330040 x^{12} - 3628852593 x^{11} \notag\\
&+ 766183444916 x^{10} - 37440753370248 x^9 \notag\\
&+ 504085851461616 x^8 + 57085439654130 x^7 \notag\\
&+ 18299281845941356 x^6
- 5676298472505780 x^5 \notag\\
&+ 422743039128068 x^4 - 4789675432569 x^3 \notag\\
&+ 7468158864 x^2 - 2899292 x + 4
\end{align*}
in the \emph{isolating interval} $[0.0944920594891271,\ 0.0944920594891272]$. 
The certificate derives $P_\lambda$ as the elimination polynomial for $\lambda$ and confirms it is irreducible (Item~5, Appendix~\ref{app:certificate}), so $P_\lambda$ is the minimal polynomial of $\lambda$ over $\mathbb Q$. The isolating interval contains exactly one real root of $P_\lambda$ (Item~7), thereby identifying $\lambda$ uniquely among the roots. 

If $\alpha$ is an algebraic number with irreducible polynomial $a_d x^d + \cdots + a_0$ over $\mathbb Q$, its \emph{field norm} $N_{\mathbb Q(\alpha)/\mathbb Q}(\alpha)$ is the product of all roots of this polynomial, which equals $(-1)^d a_0/a_d$ by Vieta's formulas~\cite{Lang2002}.
Hence
\[
N_{\mathbb{Q}(\lambda)/\mathbb{Q}}(\lambda)=\frac{4}{16}=\frac14.
\]

The other eigenvalue of $R_A$ is also a root of $P_\lambda$.

Similarly, the eigenvalue $\kappa$ of $R_B$ not inherited from $R_A$ 
turns out to be 
the unique root of the irreducible degree-7 polynomial
\[
P_\kappa(x)=x^7-35x^6+385x^5-1835x^4+2315x^3-625x^2+51x-1
\]
in the isolating interval $[0.2038817328642625,\ 0.2038817328642626]$. As above, the certificate derives $P_\kappa$ as the irreducible elimination polynomial for $\kappa$ (Item~6), making it the minimal polynomial of $\kappa$ over $\mathbb Q$, and verifies that this interval contains exactly one real root (Item~7). Hence 
\(
N_{\mathbb{Q}(\kappa)/\mathbb{Q}}(\kappa)=1.
\)

Defining $\nu$ as the unique root of $P_\lambda$ in $[0.0011453882842312,\ 0.0011453882842313]$ (Item~7 of the certificate), 
we then find that the isolating intervals give
\(
0<\nu<\lambda<\kappa<1.
\)

If $\log\lambda/\log\kappa\in\mathbb Q$, then $\lambda^m=\kappa^n$ for some integers $m,n>0$. Let $L=\mathbb Q(\lambda,\kappa)$ be the \emph{compositum}, the smallest number field containing both $\lambda$ and $\kappa$. The field norm is multiplicative ($N(\alpha\beta)=N(\alpha)N(\beta)$) and satisfies the tower formula $N_{L/\mathbb Q}(\alpha)=N_{\mathbb Q(\alpha)/\mathbb Q}(\alpha)^{[L:\mathbb Q(\alpha)]}$, where $[L:\mathbb Q(\alpha)]$ denotes the degree of the field extension $L/\mathbb Q(\alpha)$, 
again using Vieta's formulas~\cite{Lang2002}. 
Applying $N_{L/\mathbb Q}$ to both sides of $\lambda^m=\kappa^n$ gives
\[
(1/4)^{m[L:\mathbb{Q}(\lambda)]}=1^{n[L:\mathbb{Q}(\kappa)]},
\]
which is impossible since the left side is a positive power of $1/4$. Therefore 
\[
\frac{\log\lambda}{\log\kappa}\notin\mathbb{Q}.
\]

\section{Exact dominant half-step directions and half-step contractions}
\label{sec:halfstep}

The return map $R_G = J_1^G J_0^G$ is the Jacobian of the composition $F_1^G \circ F_0^G$ at the fixed point $\hat P_0^G$ (Section~\ref{sec:spectra}), and thus governs the first-order dynamics near the 2-cycle, but the proof requires a finer decomposition into half-steps. The phase symmetries from Section~\ref{sec:branch} relate the two Jacobians at corresponding phases of the 2-cycle. Recall that $F_1^G=S_G\circ F_0^G\circ S_G$ for each gadget. Differentiating at the 2-cycle points yields
\[
J_1^A=DS_A\,J_0^A\,DS_A,
\qquad
J_1^B=DS_B\,J_0^B\,DS_B,
\]
where the differentials of the phase-swap maps are the constant matrices
\[
DS_A=
\begin{pmatrix}
1&0\\
0&-1
\end{pmatrix},
\qquad
DS_B=
\begin{pmatrix}
1&0&0\\
0&-1&-1\\
0&0&1
\end{pmatrix}.
\]

\begin{lemma}[Half-step factorization]
\label{lem:halfstep}
For each gadget $G\in\{A,B\}$, let $R_G=J_1^GJ_0^G$ be the full return map with dominant eigenvalue $\mu_G$ ($\mu_A=\lambda$, $\mu_B=\kappa$). Let $v_0^G$ be a corresponding eigenvector normalized so that its first entry (the $p$-component, in the chart coordinates of Section~\ref{sec:gadgets}) is $1$, and set $v_1^G=DS_G\,v_0^G$. Then,
for
$c_G:=\sqrt{\mu_G}>0$,
we have
\[
J_0^Gv_0^G=c_Gv_1^G,
\qquad
J_1^Gv_1^G=c_Gv_0^G.
\]
\end{lemma}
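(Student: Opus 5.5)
Write $D:=DS_G$ (constant, $D^2=I$) and $K:=J_0^G$. Differentiating $F_1^G=S_G\circ F_0^G\circ S_G$ at the cycle point gives $J_1^G=DKD$. Since $J_1^G$ is evaluated at $\hat P_1^G=S_G(\hat P_0^G)$, this uses $F_0^G$ differentiated at $S_G(\hat P_1^G)=\hat P_0^G$. Hence
\[
R_G=J_1^GJ_0^G=DKDK=(DK)^2 .
\]
The whole lemma therefore reduces to spectral facts about the single matrix $T:=DK$. If $Tv=cv$, then $Kv=D^{-1}(cv)=c\,Dv$, and $J_1^G(Dv)=DKD^2v=DKv=c\,D^2v=c\,v$. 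So it suffices to show that the normalized dominant eigenvector $v_0^G$ of $R_G$ is an eigenvector of $T$ with eigenvalue $+\sqrt{\mu_G}$. Then $v_1^G=Dv_0^G$ and the two displayed identities follow at once.

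Let $\tau$ range over the eigenvalues of $T$. The eigenvalues of $R_G=T^2$ are the $\tau^2$. The dominant eigenvalue $\mu_G$ is positive and simple: the isolating intervals of Section~\ref{sec:spectra} give distinct eigenvalues $\nu<\lambda$ for $R_A$, and $\nu,\lambda,\kappa$ for $R_B$, which are pairwise distinct since $\nu<\lambda<\kappa$. Consequently the eigenvalues of $T$ are real with distinct squares, so $T$ is diagonalizable over $\mathbb R$ with the same eigenvectors as $R_G$. In particular the $\mu_G$-eigenspace of $R_G$ is one-dimensional and coincides with the eigenspace of the unique $\tau$ satisfying $\tau^2=\mu_G$. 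Indeed, if $\tau^2\ne\tau'^2$ for distinct eigenvalues, the eigenspaces of $T$ are the eigenspaces of $T^2$. Thus $v_0^G$ is a $T$-eigenvector with eigenvalue $\tau=\pm\sqrt{\mu_G}$.

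The first entry of $v_0^G$ can be normalized to $1$ provided it is nonzero. This is part of the lemma's hypothesis, and the certificate can check it via the explicit eigenvector.

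The main obstacle is the sign: showing $\tau=+\sqrt{\mu_G}$ rather than $-\sqrt{\mu_G}$. Pure algebra does not decide this, so I would settle it by certified computation. I would first compute $T=DS_G\,J_0^G$ from exact interval enclosures of $J_0^G$ at $(p_*,d_*)$ (respectively $(p_*,d_*,0)$). I would then enclose $\operatorname{tr}T$ and $\det T$, or the characteristic polynomial of $T$. For $A$, the eigenvalues of $T$ are $\pm\sqrt\lambda,\pm\sqrt\nu$, and the interval enclosures should pin down which sign combination occurs; the expected answer is that $\sqrt\lambda$ is an eigenvalue. For $B$, the block-triangular form makes the $(3,3)$ entry of $T$ equal to $\partial(E_B/R_B)/\partial e$ at $e=0$, since $DS_B$ has third row $(0,0,1)$. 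The $\kappa$-eigenvalue of $T$ is therefore exactly this entry, and interval evaluation of it should certify that it is positive.

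A conceptual backup for the sign is a positivity argument: the half-step maps are compositions of AdaBoost updates that preserve the relevant orientation. I would not rely on this, though, and would use the certified sign check as the actual proof.
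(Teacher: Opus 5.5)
Your proof is correct, but it is organized around a different object than the paper's. The paper passes to the reverse return map $R_G'=J_0^GJ_1^G=DS_G\,R_G\,DS_G$. It notes that $J_0^Gv_0^G$ lies in the one-dimensional $\mu_G$-eigenspace of $R_G'$, which is spanned by $v_1^G$. It then gets $J_1^Gv_1^G=c_Gv_0^G$ by conjugation and $c_G^2=\mu_G$ by composing. Finally it fixes the sign via the certified enclosure of $(J_0^Gv_0^G)_p$, which equals $c_G$ because $(v_1^G)_p=1$.

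Your factorization $R_G=(DS_G\,J_0^G)^2$ compresses all of this into one fact: $v_0^G$ is an eigenvector of $T=DS_G\,J_0^G$. Both identities and $c_G^2=\mu_G$ then follow immediately. You only need $\mu_G$ to be a simple eigenvalue of $T^2$, not diagonalizability of $T$.

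The payoff is a lighter sign check. For $A$, since $\sqrt\lambda>\sqrt\nu$, the sign of $\operatorname{tr}T=(J_0^A)_{11}-(J_0^A)_{22}$ alone decides it. For $B$, you are right that the eigenvalue of $T$ whose square is $\kappa$ is exactly the $(3,3)$ entry of $J_0^B$ at $(p_*,d_*,0)$, which is numerically about $0.4515$. So no eigenvector enclosure is needed there.

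The paper's route does require enclosing $v_0^G$. But it needs that enclosure anyway for the sectors, and it yields a certified enclosure of $c_G$ itself for later use.

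As written, your sign step is still a plan rather than an executed check. To finish, you must actually carry out and cite the certified sign evaluations, just as the paper cites its certificate.
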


\begin{proof}
We give the argument for gadget $A$; gadget $B$ is identical with $DS_B,\kappa$ replacing $DS_A,\lambda$, once we verify that $\kappa$ is a simple eigenvalue of $R_B$.

\textbf{Conjugate return map.} Define the 
\emph{reverse return map} 
$R_A'=J_0^A J_1^A$. Using $J_1^A=DS_AJ_0^ADS_A$ and $DS_A^2=I$,
\[
DS_A R_A DS_A
=
DS_AJ_1^AJ_0^ADS_A
=
(DS_AJ_1^ADS_A)(DS_AJ_0^ADS_A)
=
J_0^AJ_1^A
=
R_A'.
\]
Hence $R_A'$ is conjugate to $R_A$ and has the same eigenvalues $\lambda,\nu$. Moreover,
\[
R_A'v_1^A=
R_A'(DS_Av_0^A)=DS_AR_Av_0^A=\lambda\,DS_Av_0^A=\lambda v_1^A,
\]
so $v_1^A$ is the $\lambda$-eigenvector of $R_A'$.

From $R_Av_0^A=\lambda v_0^A$ we get $J_1^AJ_0^Av_0^A=\lambda v_0^A$. 
Therefore, with $w=J_0^Av_0^A$, 
\[
R_A' w = J_0^A(J_1^Aw)=J_0^A(\lambda v_0^A)=\lambda J_0^A v_0^A=\lambda w.
\]

Since $\lambda\ne \nu$, the $\lambda$-eigenspace of $R_A'$ is one-dimensional. Because both $w$ and $v_1^A$ are $\lambda$-eigenvectors, there exists a scalar $c_A$ such that
\(
J_0^Av_0^A=c_Av_1^A.
\)

\textbf{The second half-step identity and $c_A^2=\lambda$.} Using $J_1^A=DS_AJ_0^ADS_A$, $v_1^A=DS_Av_0^A$, and $DS_A^2=I$:
\[
J_1^Av_1^A
=
DS_AJ_0^ADS_A(DS_Av_0^A)
=
DS_AJ_0^Av_0^A
=
DS_A(c_Av_1^A)
=
c_Av_0^A.
\]
Composing the two half-steps gives
\[
R_Av_0^A
=
J_1^A(c_Av_1^A)
=
c_A(c_Av_0^A)
=
c_A^2v_0^A
=
\lambda v_0^A,
\]
so, since $v_0^A\neq 0$, $c_A^2=\lambda$.

\textbf{Positivity from certified half-step p-multipliers.} Because the $p$-component of $v_1^A$ is exactly $1$, the scalar $c_A$ is the $p$-component of $J_0^Av_0^A$.
The certificate 
(Item~9 of Appendix~\ref{app:certificate}) computes this $p$-component directly and proves
\[
(J_0^A v_0^A)_p,\ (J_1^A v_1^A)_p
\in [0.307395607438829821,\ 0.307395607492351897]\subset(0,1).
\]
Hence $c_A>0$. Since $c_A^2=\lambda$, it follows that $c_A=\sqrt\lambda$.

\textbf{Gadget B: simplicity of $\kappa$ and positivity.} 
As argued above,
$R_B$ has distinct 
eigenvalues $\lambda,\nu,\kappa$.
Here, $\kappa$ is a simple eigenvalue of the block-triangular $R_B$, and the same argument applies. The certificate certifies
\[
(J_0^B v_0^B)_p\in[0.451532647302682166,\ 0.451532650145556158],
\]
\[
(J_1^B v_1^B)_p\in[0.451532647288647226,\ 0.451532650159591153]\subset(0,1),
\]
so $c_B>0$, and therefore $c_B=\sqrt\kappa$.
\end{proof}

\subsection*{A-gadget eigenvectors and contraction}

We now characterize 
eigenvectors for each gadget and verify the half-step contraction. Since $\lambda\ne\nu$, the eigenvalue $\lambda$ of $R_A$ is simple. Define $\theta_A$ as the unique scalar such that $R_A(1,\theta_A)^\top=\lambda(1,\theta_A)^\top$, and set $v_0^A=(1,\theta_A)$. The certificate (Item~8 of Appendix~\ref{app:certificate}) verifies
\[
\theta_A\in[0.580405272518281104,\ 0.580405291461442915].
\]
Set
\(
v_1^A=DS_Av_0^A=(1,-\theta_A).
\)
By Lemma~\ref{lem:halfstep},
for 
$c_A:=\sqrt{\lambda}$,
\(
J_0^A v_0^A = c_A v_1^A,
\,
J_1^A v_1^A = c_A v_0^A.
\)
Moreover,
\[
c_A\in[0.307395607465570497,\ 0.307395607465570659] < 1.
\]

\subsection*{B-gadget eigenvectors and contraction}

Since $\kappa\notin\{\lambda,\nu\}$, the eigenvalue $\kappa$ of $R_B$ is simple. 
Define $(\theta_{B,0},\chi_B)$ 
as the unique pair such that 
\[
R_B(1,\theta_{B,0},\chi_B)^\top=\kappa(1,\theta_{B,0},\chi_B)^\top
\]
and set $v_0^B=(1,\theta_{B,0},\chi_B)$. The certificate 
(also Item~8 of Appendix~\ref{app:certificate}) verifies
\begin{align}
\theta_{B,0}&\in[0.198318090249174405,\ 0.198318111627417224], \\
\chi_B&\in[1.01349886578531906,\ 1.01349888527878102]. \label{eq:chiB}
\end{align}
Set
\begin{align*}
v_1^B&=DS_Bv_0^B=(1,\theta_{B,1},\chi_B), \notag\\
\theta_{B,1}&=-(\theta_{B,0}+\chi_B)
\in[-1.21181699690619825,\ -1.21181695603449347].
\end{align*}
By Lemma~\ref{lem:halfstep},
for 
$c_B:=\sqrt{\kappa}$,
\(
J_0^B v_0^B = c_B v_1^B,
J_1^B v_1^B = c_B v_0^B.
\)
Moreover,
\[
c_B\in[0.451532648724610500,\ 0.451532648724610611] \in (0,1).
\]

\section{Explicit phase sectors and their certified invariance}
\label{sec:sectors}

We define explicit cone-like sectors around each phase of the 2-cycle and certify their forward invariance under the branch maps via rigorous interval arithmetic.

\subsection*{Review of interval arithmetic}
\label{sec:interval_arithmetic}
We briefly discuss the framework. In \emph{interval arithmetic}~\cite{Moore2009}, every real quantity is enclosed by a closed interval $[\underline x,\overline x]\subset\mathbb R$ with $\underline x\le\overline x$, and each elementary operation $\circ\in\{+,-,\times,\div\}$ is replaced by an interval extension satisfying the \emph{inclusion property}: whenever $x\in[\underline x,\overline x]$ and $y\in[\underline y,\overline y]$,
\[
x\circ y\;\in\;[\underline x,\overline x]\circ[\underline y,\overline y].
\]
For example, $[\underline x,\overline x]+[\underline y,\overline y]=[\underline x+\underline y,\,\overline x+\overline y]$ and $[\underline x,\overline x]\times[\underline y,\overline y]=[\min S,\,\max S]$ where $S=\{\underline x\,\underline y,\,\underline x\,\overline y,\,\overline x\,\underline y,\,\overline x\,\overline y\}$; division requires $0\notin[\underline y,\overline y]$. By induction on the number of operations, evaluating any rational function $f$ via interval arithmetic on an axis-aligned box $B\subset\mathbb R^n$---provided no intermediate divisor interval contains zero---yields an interval $I$ satisfying $f(x)\in I$ for every $x\in B$; that is, $I\supseteq\{f(x):x\in B\}$.

In our certificate, all interval endpoints are exact rational numbers, 
so every enclosure is justified.
Also, Lemma~\ref{lem:regularity} guarantees that all denominators arising from the branch maps are nonzero on the relevant domains, ensuring that no division by an interval containing zero occurs. This is the mechanism by which the branch maps, their Jacobians, and their Hessians are enclosed on the phase boxes defined below.

We consider two accuracy parameters $r$ and $\rho$,
where $r$ bounds the principal coordinate (distance from the 2-cycle along the \emph{dominant eigendirection}, 
which is the direction of $v_i^G$) and $\rho$ bounds the transverse slope.
We set them as 
\(
r=2.5\times 10^{-8},
\,
\rho=10^{-3}.
\)

Below we will 
use standard notation for interval arithmetic, for instance,
multiplying intervals with scalars as $a[b,c] = [ab,ac]$, for $a>0$, $b\le c$. 
We emphasize that we always use closed intervals 
$[b,c]$
for interval arithmetic, and thus the notation
$(b,c)$
refers to a two-dimensional row vector and not an interval. 

\subsection*{A-sectors}

For each phase $i\in\{0,1\}$, the A-sector is
 the set 
\[
S_i^A=
\Bigl\{
\hat P_i^A + x\,v_i^A + y\,(0,1):
0<x\le r,\ |y|\le \rho x
\Bigr\}.
\]

\subsection*{B-sectors}

The B-sectors have two transverse directions:
\[
S_i^B=
\Bigl\{
\hat P_i^B + x\,v_i^B + y\,(0,1,0)+z\,(0,0,1):
0<x\le r,\ |y|\le \rho x,\ |z|\le \rho x
\Bigr\}.
\]
Recalling the certified interval~\eqref{eq:chiB}, $\chi_B>1.013$, so $\chi_B-\rho>1.013-10^{-3}>0$.
Hence, 
every point of $S_i^B$ has $e>0$, so $S_i^B\subset\Sigma_B^\circ$.

\begin{lemma}[Exact first-column structure in the adapted bases]
\label{lem:firstcol}
Define the adapted basis matrices
\begin{align*}
B_0^A&=[v_0^A,(0,1)],
&
B_1^A&=[v_1^A,(0,1)], \\
B_0^B&=[v_0^B,(0,1,0),(0,0,1)],
&
B_1^B&=[v_1^B,(0,1,0),(0,0,1)].
\end{align*}
Then, for $i\in\{0,1\}$ (where $i\oplus 1$ denotes addition modulo~$2$),
\[
(B_{i\oplus 1}^A)^{-1} J_i^A B_i^A
=
\begin{pmatrix}
 c_A & * \\
 0 & *
\end{pmatrix},
\]
and
\[
(B_{i\oplus 1}^B)^{-1} J_i^B B_i^B
=
\begin{pmatrix}
 c_B & * & * \\
 0 & * & * \\
 0 & * & *
\end{pmatrix}.
\]
\end{lemma}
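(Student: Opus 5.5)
The plan is to read the first column of the conjugated matrix as the coordinates of the image of the first basis vector. The first column of $(B_{i\oplus1}^G)^{-1} J_i^G B_i^G$ is $(B_{i\oplus1}^G)^{-1} J_i^G v_i^G$, because the first column of $B_i^G$ is $v_i^G$ by definition.

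Lemma~\ref{lem:halfstep} gives $J_0^G v_0^G = c_G v_1^G$ and $J_1^G v_1^G = c_G v_0^G$. In both cases this says $J_i^G v_i^G = c_G v_{i\oplus1}^G$. Hence the first column equals
\[
c_G\,(B_{i\oplus1}^G)^{-1} v_{i\oplus1}^G .
\]
Since $v_{i\oplus1}^G$ is the first column of $B_{i\oplus1}^G$, this is $c_G e_1$. That gives the claimed $(c_G,0)^\top$ for $G=A$ and $(c_G,0,0)^\top$ for $G=B$. The starred entries are unconstrained, so nothing further is claimed about them.

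The only point that needs checking is that each $B_j^G$ is invertible, so that the conjugation makes sense. For $A$ we have $B_j^A = \begin{pmatrix}1&0\\ \pm\theta_A&1\end{pmatrix}$, which is lower triangular with determinant $1$. For $B$, $B_j^B$ is likewise lower triangular with unit diagonal, because the $p$-entry of $v_j^B$ is normalized to $1$. Indeed, $v_1^B = DS_B v_0^B$ keeps first entry $1$, since the first row of $DS_B$ is $(1,0,0)$. So $\det B_j^B = 1$.

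There is no real obstacle here. All the substance lies in Lemma~\ref{lem:halfstep}: the fact that the half-step Jacobians map $v_i^G$ exactly onto a positive multiple of $v_{i\oplus1}^G$, with a common scalar $c_G = \sqrt{\mu_G}$ for both phases. That lemma is assumed here. The present statement is a reformulation of it in the adapted bases.
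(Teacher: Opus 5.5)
Your proposal is correct and matches the paper's proof: you read off the first column as $(B_{i\oplus1}^G)^{-1}J_i^G v_i^G = c_G\,(B_{i\oplus1}^G)^{-1}v_{i\oplus1}^G = c_G e_1$ using Lemma~\ref{lem:halfstep}. You also check that each $B_j^G$ is unit lower triangular and hence invertible, which the paper leaves implicit.
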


\begin{proof}
The first column of $B_i^A$ is $v_i^A$. By Lemma~\ref{lem:halfstep},
\(
J_i^A v_i^A = c_A v_{i\oplus1}^A.
\)
Since $v_{i\oplus1}^A$ is the first column of $B_{i\oplus1}^A$, left-multiplying by $(B_{i\oplus1}^A)^{-1}$ gives first coordinate $c_A$ and zero transverse coordinate. That is exactly the displayed matrix form. The B-case is identical, using $J_i^B v_i^B=c_B v_{i\oplus1}^B$.
\end{proof}

For each sector $S_i^G$, a \emph{phase box} is a coordinate-aligned rectangular box (Cartesian product of closed intervals in the chart coordinates) that contains the closure of $S_i^G$; the certificate uses phase boxes because interval arithmetic operates on such axis-aligned rectangles.
The certificate (Item~10 of Appendix~\ref{app:certificate}) computes interval Jacobians in these exact bases and 
second-derivative bounds on explicit rational phase boxes covering the sectors. 

The B-phase boxes extend to $e=0$ (their $e$-coordinate ranges over $[0,k_{Be}r]$); since $Q_B|_{e=0}=Q_A$ and $R_B|_{e=0}=R_A$.
Lemma~\ref{lem:regularity} (applied to the A-gadget at $e=0$ and to the B-gadget for $e>0$) ensures that all denominators are nonzero on the full closed phase boxes, so every interval-arithmetic division is well-defined.
By Lemma~\ref{lem:firstcol}, the lower entries in the first transformed column vanish exactly, so the sector inequalities only use the remaining coefficients and the certified Hessian remainder terms. The resulting certified half-step bounds are listed below:

\subsection*{A-sector certified bounds}

In the sector coordinates, the ratio $x'/x$ measures the contraction factor along the dominant eigendirection 
(and should be close to the half-step eigenvalue $c_A\approx 0.3074$), while $|y'|/x'$ measures the transverse drift relative to the cone boundary~$\rho$.

For both phases $i=0,1$,
\begin{align*}
\frac{x'}{x}&\in
[0.307394561088771942,\ 0.307396653842409774], \\
\frac{|y'|}{x'}&\le 1.10321761561807761\times 10^{-4}<\rho.
\end{align*}
(For phase $0$ the certified bound is slightly smaller,
$1.10321759830186699\times 10^{-4}$.)

\subsection*{B-sector certified bounds}

At phase $0$:
\begin{align*}
\frac{x'}{x}&\in
[0.451388989838190632,\ 0.451676307610047724], \\
\frac{\max(|y'|,|z'|)}{x'}&\le 6.82589234315899433\times 10^{-4}<\rho.
\end{align*}

At phase $1$:
\begin{align*}
\frac{x'}{x}&\in
[0.451389955166172293,\ 0.451675342282066063], \\
\frac{\max(|y'|,|z'|)}{x'}&\le 6.84951234683088803\times 10^{-4}<\rho.
\end{align*}

The A-gadget bounds above are uniform over both phases (the phase-0 bound on $|y'|/x'$ is slightly tighter, as noted in the parenthetical), so they are reported together. The B-gadget's 3-dimensional dynamics make the two phases slightly asymmetric, so we report them separately above.

Therefore
\[
F_i^A(S_i^A)\subset S_{i\oplus1}^A,
\qquad
F_i^B(S_i^B)\subset S_{i\oplus1}^B.
\]

Define the transverse ratios $\tau_m^A=|y_m|/x_m^A$ and $\tau_m^B=\max(|y_m|,|z_m|)/x_m^B$. 
The same computation yields explicit transverse-ratio recurrences, which will be needed for the asymptotic analysis in Section~\ref{sec:asymptotics}:
\begin{align*}
\tau_{m+1}^A&\le \alpha_A\tau_m^A+\beta_A x_m^A, \\
&\quad \alpha_A\le 0.110098192661922619,\quad \beta_A\le 8.94275599540566418, \notag\\
\tau_{m+1}^B&\le \alpha_B\tau_m^B+\beta_B x_m^B, \\
&\quad \alpha_B\le 0.684664374851195431,\quad \beta_B\le 12.8511069222196774, \notag
\end{align*}
Here $m$ indexes the successive bursts won by each respective gadget. Since $\alpha_A,\alpha_B<1$ and the principal coordinates are geometrically summable, the transverse-ratio series $\sum_m \tau_m^A$ and $\sum_m \tau_m^B$ converge; the full argument is given in Section~\ref{sec:asymptotics}.

\section{Local branch words and whole-burst locking}
\label{sec:locking}

Within the invariant sectors, we must verify two properties: that the column-selection sequence within each gadget matches the expected half-word ($H_0$ or $H_1$), and that once a gadget wins the first round of a 5-step burst, it wins all five rounds. Both are established by certified gap bounds.

The certificate (Items~11--12 of Appendix~\ref{app:certificate}) checks that at each step of each half-word on the full phase boxes, the corresponding column has the largest edge among the four columns of the active gadget. The minimum margin is 
\[
g_{\mathrm{br}}^{\mathrm{box}}
>
0.0250983508506417014.
\]
It also certifies that at steps $2,3,4,5$ of each half-word the selected edge exceeds $\mu_*=2p_*-1$ by at least
\[
\Delta_{\mathrm{box}}
>
0.0758691457701749199.
\]
The largest possible first-edge surplus on any phase box is only
\[
\delta_{\mathrm{first}}^{\max}
<
1.05687287632489371\times 10^{-7}.
\]
Consequently, at steps $2,3,4,5$ of any burst the active gadget's selected edge exceeds the best edge of the frozen gadget by at least $\Delta_{\mathrm{box}}-\delta_{\mathrm{first}}^{\max}$ (which we denote as the \emph{whole-burst lock margin}): 
\[
\Delta_{\mathrm{box}}-\delta_{\mathrm{first}}^{\max}
>
0.0758690400828872874.
\]

These bounds have three immediate consequences. First, the column-selection sequence within the A-gadget is exactly $H_0$ on $S_0^A$ and exactly $H_1$ on $S_1^A$; likewise for the B-gadget on $S_0^B$ and $S_1^B$. Second, and more importantly, once a gadget wins the first round of a burst, it stays globally ahead for the remaining four rounds of that burst. To see this, observe that during an A-burst the B-state does not move at all, so the best B-correlation throughout the burst is bounded by its first-round best edge; the same holds with A and B reversed. Since the strict margin
$\Delta_{\mathrm{box}}-\delta_{\mathrm{first}}^{\max}>0
$
separates the selected edge from the first-edge surplus, the first-round winner necessarily keeps winning for rounds $2,3,4,5$.

\section{Product burst dynamics}
\label{sec:burst}

Let $i_n\in\{0,1\}$ and $j_n\in\{0,1\}$ denote the A-phase and B-phase at burst boundary~$n$, respectively.
Because the starting point lies in $S_0^A\times S_0^B$ (Section~\ref{sec:start}), the certified sector invariance $F_i^G(S_i^G)\subset S_{i\oplus1}^G$ from Section~\ref{sec:sectors} and the whole-burst lock from Section~\ref{sec:locking} imply by induction that $\hat a_n\in S_{i_n}^A$ and $\hat b_n\in S_{j_n}^B$ for every burst boundary $n$.

Let
$W_n=a_n\otimes b_n\in\mathcal P$
be the burst-boundary states, where $a_n\in\Delta^3$ and $b_n\in\Delta^4$ are the simplex marginals. Write the corresponding chart coordinates as
\begin{align}\label{hatab}
\hat a_n&=\hat P_{i_n}^A+x_n^A v_{i_n}^A+y_n^A(0,1), \\
\hat b_n&=\hat P_{j_n}^B+x_n^B v_{j_n}^B+y_n^B(0,1,0)+z_n^B(0,0,1),\nonumber
\end{align}
with $\hat a_n\in S_{i_n}^A$ and $\hat b_n\in S_{j_n}^B$. For brevity write
\(x_n:=x_n^A,
\,s_n:=x_n^B\).

At the start of a burst, the first selected local edge in either gadget is exactly
\(
2p-1=\mu_*+2(p-p_*).
\)
Since in the sector coordinates $p-p_*=x$, the first global round chooses

\begin{itemize}
\item A if $x_n>s_n$;
\item B if $s_n>x_n$;
\item A on equality, by smallest-index tie-breaking, because the A-columns have smaller global indices than the B-columns.
\end{itemize}
\noindent Because of the whole-burst lock margin from Section~\ref{sec:locking}, that first-round winner then wins the entire 5-step burst.

\subsection*{Half-step contraction bounds}

The certified sector bounds from Section~\ref{sec:sectors} yield the following contraction bounds. If A wins burst $n$, then
$
x_{n+1}\in[m_A,M_A]\,x_n, 
s_{n+1}=s_n,
$
with
\begin{align*}
m_A&=0.307394561088771942, &
M_A&=0.307396653842409774.
\end{align*}
If B wins burst $n$, then
$s_{n+1}\in[m_B,M_B]\,s_n, 
x_{n+1}=x_n$,
with
\begin{align*}
m_B&=0.451388989838190632, &
M_B&=0.451676307610047724.
\end{align*}

Writing $r_n=x_n/s_n$, the contraction bounds above translate to: if A wins burst $n$, then $r_{n+1}\in[m_A r_n,\, M_A r_n]$; if B wins, then $r_{n+1}\in[r_n/M_B,\, r_n/m_B]$.

Since A wins precisely when $r_n\ge 1$ and B wins when $r_n<1$, the ratio $r_n$ has a multiplicative dynamics that we can summarize as follows.

\begin{lemma}[Burst-ratio dynamics and run-length bounds]
\label{lem:ratio}
For all sufficiently large $n$, one has $r_n\in[m_A,\,1/m_B]$. Moreover:

\begin{enumerate}
\item[(i)] every A-run has length exactly $1$;
\item[(ii)] every B-run has length at most $2$.
\end{enumerate}

Consequently, every block of three consecutive bursts contains at least one A-burst and at least one B-burst, and
$x_n+s_n\to 0$.
\end{lemma}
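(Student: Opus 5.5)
The plan is to treat $r_n=x_n/s_n$ as a one-dimensional multiplicative system with two branches, and use only the contraction bounds just stated. If A wins burst $n$ (i.e.\ $r_n\ge 1$), then $r_{n+1}\in[m_Ar_n,M_Ar_n]$. If B wins (i.e.\ $r_n<1$), then $r_{n+1}\in[r_n/M_B,r_n/m_B]$. The numerical facts needed are $M_A<1$, $m_B<1$, $1/M_B>1$, $M_A/m_B<1$ (since $0.3074/0.4514\approx 0.681$), $M_A/m_B^2>1$ is irrelevant, but $m_A/M_B^2\ge$ ? will be checked, and crucially $1/m_B^2 \cdot m_A$ type inequalities below.

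\textbf{Step 1 (entering the window).} From any $r_n$, A-bursts multiply $r$ by at most $M_A<1$ and B-bursts multiply by at least $1/M_B>1$. So if $r_n\ge1$, after finitely many A-bursts $r$ drops below $1$. If $r_n<1$, finitely many B-bursts push it to $\ge1$. Hence $r$ crosses $1$ infinitely often, and both gadgets win infinitely often.

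\textbf{Step 2 (invariance of the window).} Next show that once $r_n\in[m_A,1/m_B]$ this persists, and that the window is entered.
\begin{itemize}
\item If $r_n\in[1,1/m_B]$, A wins and $r_{n+1}\in[m_A,M_A/m_B]\subset[m_A,1)$, using $M_A/m_B<1$.
\item If $r_n\in[m_A,1)$, B wins and $r_{n+1}\le r_n/m_B<1/m_B$, while $r_{n+1}\ge r_n/M_B\ge m_A/M_B>m_A$.
\end{itemize}
For entry, the first time $r$ crosses from $<1$ to $\ge1$, the previous value was $<1$, so the new value is $<1/m_B$ and lands in $[1,1/m_B)$. The first time it crosses from $\ge1$ to $<1$, the previous value was $\ge1$, so the new value is $\ge m_A$. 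Either crossing places $r$ in the window, and Step 1 guarantees a crossing occurs.

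\textbf{Step 3 (run lengths).}
\begin{itemize}
\item (i) Inside the window, an A-burst starts from $r_n\in[1,1/m_B]$ and produces $r_{n+1}\le M_A/m_B<1$, so the next burst is B.
\item (ii) A B-run starts from $r_n\ge m_A$. After two B-bursts, $r\ge m_A/M_B^2=0.30739\ldots/0.20401\ldots\approx1.507>1$, so a third consecutive B-burst is impossible.
\end{itemize}
The only delicate point is this numerical inequality $m_A>M_B^2$, which holds with margin ($M_B^2\approx0.20401$). This is the main obstacle, in the sense that everything hinges on the certified constants being tight enough; I would verify it in exact rationals from the displayed endpoints.

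\textbf{Step 4 (consequences).} By (i) and (ii), eventually the word avoids AA and BBB, so every three consecutive bursts contain both letters. Each A-burst multiplies $x$ by $\le M_A$ and each B-burst multiplies $s$ by $\le M_B$, and both letters occur infinitely often, so $x_n\to0$ and $s_n\to0$ (they are nonincreasing otherwise). Hence $x_n+s_n\to0$.
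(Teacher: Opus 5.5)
Your proof is correct and follows the paper's argument essentially step for step. It uses the same two-branch multiplicative dynamics for $r_n$, the same argument that both burst types occur, and the same certified inequalities $M_A/m_B<1$ and $m_A/M_B^2>1$ for the run lengths. (Step~3 correctly uses $m_A/M_B^2$; your preamble's mention of $m_A/m_B^2$ is a slip.) The differences are cosmetic: you enter the window at the first crossing of $1$ and then verify its invariance, whereas the paper notes that $r\ge m_A$ persists after any A-burst and $r<1/m_B$ after any B-burst; and you get $x_n+s_n\to 0$ directly rather than from the paper's geometric bound $x_{n+3}+s_{n+3}\le\max(M_A,M_B)(x_n+s_n)$.
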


\begin{proof}
\textbf{Entry into the compact interval.} After a B-burst at step $n$ (so $r_n<1$), we have $r_{n+1}\le r_n/m_B<1/m_B$. After an A-burst at step $n$ (so $r_n\ge 1$), we have $r_{n+1}\ge m_Ar_n\ge m_A$. Once the trajectory has experienced at least one A-burst and one B-burst, $r_n$ enters and remains in $[m_A,1/m_B]$.
Both burst types must eventually occur: if only A-bursts occurred, then $r_n\le M_A^n r_0\to 0$ (since $M_A<1$), eventually forcing $r_n<1$ and hence a B-burst; symmetrically, if only B-bursts occurred, then $r_n\ge r_0/M_B^n\to\infty$ (since $M_B<1$), eventually forcing $r_n\ge 1$ and hence an A-burst. \\

\noindent \textbf{A-runs have length 1.} Suppose A wins at step $n$, so $r_n\ge 1$, and that $r_n\le 1/m_B$ (since we are in the compact interval). Then,
\[
r_{n+1}\le M_Ar_n\le \frac{M_A}{m_B}<1,
\]
where the last inequality is the certified bound $M_A/m_B<1$. Hence B wins at step $n+1$, and every A-run has length exactly $1$.\\

\noindent \textbf{B-runs have length at most 2.} Suppose B wins at step $n$, so $r_n<1$, and that $r_n\ge m_A$. After one B-burst,
\[
r_{n+1}\ge \frac{r_n}{M_B}\ge \frac{m_A}{M_B}.
\]
If $r_{n+1}\ge 1$, the B-run ends after one step. If $r_{n+1}<1$, then B wins again, and
\[
r_{n+2}\ge \frac{r_{n+1}}{M_B}\ge \frac{m_A}{M_B^2}>1,
\]
where the last inequality is the certified bound $m_A/M_B^2>1$. Hence A wins at step $n+2$, and every B-run has length at most $2$.\\

\noindent \textbf{Exponential convergence.} By Steps 2 and 3, every block of three consecutive bursts contains at least one A-burst and at least one B-burst, so
\[
x_{n+3}+s_{n+3}\le \theta\,(x_n+s_n),
\qquad
\theta=\max(M_A,M_B)<1.
\]
 This finishes the proof. 
\end{proof}

\section{Local asymptotics on the two gadgets}
\label{sec:asymptotics}

We now establish that the principal coordinates along each gadget's dominant eigendirection contract at rates asymptotically governed by $c_A=\sqrt\lambda$ and $c_B=\sqrt\kappa$, with summable relative errors. Let $m\mapsto \hat a^{(m)}$ be the chart coordinates of the local A-state immediately before the $m$th A-burst, and $m\mapsto \hat b^{(m)}$ the chart coordinates of the local B-state immediately before the $m$th B-burst. Write
\[
x_m^A = p(\hat a^{(m)})-p_*,
\qquad
x_m^B = p(\hat b^{(m)})-p_*.
\]

Because the exact basis vectors from Section~\ref{sec:halfstep} satisfy
\(
J_i^A v_i^A = c_A v_{i\oplus1}^A,
\,
J_i^B v_i^B = c_B v_{i\oplus1}^B,
\)
the certified Taylor bounds from Section~\ref{sec:sectors} give
\begin{align*}
x_{m+1}^A &= c_A x_m^A\bigl(1+\varepsilon_m^A\bigr), &
|\varepsilon_m^A|
&\le
\Gamma_A\tau_m^A+\Delta_A x_m^A, \\
x_{m+1}^B &= c_B x_m^B\bigl(1+\varepsilon_m^B\bigr), &
|\varepsilon_m^B|
&\le
\Gamma_B\tau_m^B+\Delta_B x_m^B,
\end{align*}
with certified constants
\begin{align*}
\Gamma_A&\le 0.003273115711042833, &
\Delta_A&\le 5.232171784223787, \\
\Gamma_B&\le 0.31803397447249865, &
\Delta_B&\le 8.463704731868965,
\end{align*}
and therefore uniform relative-error bounds
\begin{align*}
\Gamma_A\tau+\Delta_A x &\le 0.00000340392000564842791
&&\text{throughout the A sectors,} \\
\Gamma_B\tau+\Delta_B x &\le 0.000318245567090795350
&&\text{throughout the B sectors.}
\end{align*}
In particular, the relative errors are everywhere $<0.01$.

Since $x_{m+1}^A\le M_Ax_m^A$ and $x_{m+1}^B\le M_Bx_m^B$, both local $x$-sequences are summable. Iterating the transverse recurrence $\tau_{m+1}\le\alpha\tau_m+\beta x_m$ gives $\tau_m\le\alpha^m\tau_0+\beta\sum_{k=0}^{m-1}\alpha^{m-1-k}x_k$. Summing over $m$ and exchanging the order of summation,
\[
\sum_m\tau_m\le\frac{\tau_0}{1-\alpha}+\frac{\beta}{1-\alpha}\sum_k x_k<\infty.
\]
In particular, both transverse-ratio series $\sum_m \tau_m^A$ and $\sum_m \tau_m^B$ converge, and consequently
\[
\sum_m |\varepsilon_m^A|<\infty,
\qquad
\sum_m |\varepsilon_m^B|<\infty.
\]
Since $|\varepsilon_m^A|,|\varepsilon_m^B|<0.01$ throughout, the bound $|\log(1+u)|\le 2|u|$ applies, giving
\begin{align*}
\log x_m^A &= \log x_0^A + m\log c_A + O(1), \\
\log x_m^B &= \log x_0^B + m\log c_B + O(1).
\end{align*}
Let $N_A(n)$ denote the number of A-bursts among bursts $0,\dots,n-1$, and let $N_B(n)=n-N_A(n)$.
At burst boundary $n$, the A-state has been updated exactly $N_A(n)$  times and the B-state exactly $N_B(n)$ times, 
so
\begin{align*}
\log x_n &= \log x_0 + N_A(n)\log c_A + O(1), \\
\log s_n &= \log s_0 + N_B(n)\log c_B + O(1).
\end{align*}

Because $r_n=x_n/s_n$ stays in the 
compact interval $[m_A,1/m_B]$, the logarithmic difference is bounded:
\[
N_A(n)\log c_A - N_B(n)\log c_B = O(1).
\]
Since $N_A(n)+N_B(n)=n$, it follows that
\[
\frac{N_A(n)}{n}\to
\frac{-\log c_B}{-\log c_A-\log c_B}
=
\frac{-\log\kappa}{-\log\lambda-\log\kappa}.
\]
Because $\log\lambda/\log\kappa\notin\mathbb{Q}$ (Section~\ref{sec:spectra}), this limiting frequency is irrational. Since every eventually periodic binary sequence has rational limiting frequency, the burst-winner sequence is not eventually periodic.

\section{No finite-cycle convergence}
\label{sec:nocycle}

\begin{figure}[b!]
\centering
\begin{tikzpicture}[
  >=Latex,
  font=\small,
  Acol/.style={blue!60!black},
  Bcol/.style={orange!75!black},
]

\def\W{5}     
\def\H{3.8}   
\def\dr{0.8}  

\draw[Acol, very thin, densely dotted, opacity=0.25] (0,0) -- (\W,0);
\draw[Acol, very thin, densely dotted, opacity=0.25] (0,\H) -- (\W,\H);
\draw[Bcol, very thin, densely dotted, opacity=0.25] (0,0) -- (0,\H);
\draw[Bcol, very thin, densely dotted, opacity=0.25] (\W,0) -- (\W,\H);

\foreach \x/\y in {0/0, 0/3.8, 5/0, 5/3.8}
  \draw[dashed, gray!45, thin] (\x,\y) circle (\dr);

\fill (0,0) circle (2.2pt)
  node[font=\footnotesize, below left=2pt] {$Q_{00}$};
\fill (\W,0) circle (2.2pt)
  node[font=\footnotesize, below right=2pt] {$Q_{10}$};
\fill (0,\H) circle (2.2pt)
  node[font=\footnotesize, above left=2pt] {$Q_{01}$};
\fill (\W,\H) circle (2.2pt)
  node[font=\footnotesize, above right=2pt] {$Q_{11}$};

\draw[<->, gray!60, very thin]
  ({\dr+0.03}, -0.15) -- ({\W-\dr-0.03}, -0.15)
  node[midway, below=1pt, font=\scriptsize, text=gray!60] {$\ge\delta$};

\node[Acol, font=\footnotesize] at (\W/2, -0.95)
  {\textnormal{A-phase} ($i$)};
\node[font=\scriptsize] at (0, -0.5) {$0$};
\node[font=\scriptsize] at (\W, -0.5) {$1$};
\node[Bcol, font=\footnotesize, rotate=90] at (-0.95, \H/2)
  {\textnormal{B-phase} ($j$)};
\node[font=\scriptsize] at (-0.5, 0) {$0$};
\node[font=\scriptsize] at (-0.5, \H) {$1$};


\coordinate (W0) at ($(0,0)+(40:0.52)$);
\coordinate (W1) at ($(0,\H)+(-30:0.43)$);
\coordinate (W2) at ($(\W,\H)+(200:0.36)$);
\coordinate (W3) at ($(\W,0)+(145:0.30)$);
\coordinate (W4) at ($(\W,\H)+(-55:0.24)$);
\coordinate (W5) at ($(0,\H)+(160:0.19)$);
\coordinate (W6) at ($(0,0)+(215:0.15)$);
\coordinate (W7) at ($(\W,0)+(-25:0.12)$);
\coordinate (W8) at ($(\W,\H)+(135:0.09)$);

\draw[->,Bcol,semithick,shorten >=1.5pt,shorten <=1.5pt]
  (W0) to[bend left=14] (W1);
\draw[->,Acol,semithick,shorten >=1.5pt,shorten <=1.5pt]
  (W1) to[bend left=10] (W2);
\draw[->,Bcol,semithick,shorten >=1.5pt,shorten <=1.5pt]
  (W2) to[bend left=10] (W3);
\draw[->,Bcol,semithick,shorten >=1.5pt,shorten <=1.5pt]
  (W3) to[bend left=18] (W4);
\draw[->,Acol,semithick,shorten >=1.5pt,shorten <=1.5pt]
  (W4) to[bend left=14] (W5);
\draw[->,Bcol,semithick,shorten >=1.5pt,shorten <=1.5pt]
  (W5) to[bend left=14] (W6);
\draw[->,Acol,semithick,shorten >=1.5pt,shorten <=1.5pt]
  (W6) to[bend left=10] (W7);
\draw[->,Bcol,semithick,shorten >=1.5pt,shorten <=1.5pt]
  (W7) to[bend left=12] (W8);

\foreach \k in {0,...,8}
  \fill[red!60!black] (W\k) circle (1.2pt);

\node[font=\tiny, red!60!black, right=2pt] at (W0) {$W_0$};
\node[font=\tiny, red!60!black, left=2pt] at (W8) {$W_8$};

\node[font=\small, gray!60] at ($(\W,\H)+(-0.25,-0.35)$) {$\cdots$};

\draw[->,blue!60!black,semithick] (6.3, 3.5) -- ++(0.5,0);
\node[blue!60!black, font=\scriptsize, right] at (6.9, 3.5) {A-burst};
\draw[->,orange!75!black,semithick] (6.3, 3.0) -- ++(0.5,0);
\node[orange!75!black, font=\scriptsize, right] at (6.9, 3.0) {B-burst};

\node[font=\small, anchor=north] at (\W/2, -1.5) {
  $\displaystyle\frac{N_A(n)}{n}\;\to\;
  \frac{-\log\kappa}{-\log\lambda-\log\kappa}
  \;\notin\;\mathbb{Q}
  \quad\Longrightarrow\quad
  \text{burst-winner sequence is aperiodic}$
};

\end{tikzpicture}
\caption{Schematic of the no-finite-cycle argument.
The four points $Q_{ij}=P_i^A\otimes P_j^B$ sit at the vertices of a phase square in~$\mathcal{P}$, pairwise separated by at least~$\delta>0$.
Each A-burst (\textcolor{blue!60!black}{blue}) toggles the A-phase~$i$, and each B-burst (\textcolor{orange!75!black}{orange}) toggles the B-phase~$j$.
The burst-boundary states~$W_n$ (\textcolor{red!60!black}{red dots}) converge toward these vertices as $\|W_n-Q_{i_nj_n}\|\to 0$, but the irrational limiting frequency of A-bursts precludes eventual periodicity of the burst-winner sequence and hence convergence of the AdaBoost orbit to any finite cycle.}
\label{fig:nocycle}
\end{figure}

We now combine the preceding results to show that the AdaBoost orbit on $M_0$ does not converge to any finite cycle. The argument proceeds by contradiction: assuming cycle convergence, we deduce that the burst-winner sequence is eventually periodic, contradicting its irrational asymptotic frequency.

Recalling \eqref{p0a}, \eqref{p0b},
define
$Q_{ij}=P_i^A\otimes P_j^B,
\,
i,j\in\{0,1\}$.
These four points are distinct: the tensor product $P_i^A\otimes P_j^B$ determines its marginals by summation, and $P_0^A\ne P_1^A$, $P_0^B\ne P_1^B$ (their 3rd and 4th coordinates are swapped, with $d_*\ne\tfrac14$).
Since $x_n,s_n\to 0$, the burst-boundary states satisfy
$\|W_n-Q_{i_nj_n}\|\to 0$. 
Indeed, 
first recall that $W_n=a_n\otimes b_n\in\mathcal P$.
Then, 
mapping the sector parametrization \eqref{hatab} to the affine chart
gives $\|a_n-P_{i_n}^A\|_1=O(x_n)$ and $\|b_n-P_{j_n}^B\|_1=O(s_n)$, while for distributions $\|a\otimes b-a'\otimes b'\|_1\le\|a-a'\|_1+\|b-b'\|_1$; the claim then follows from $x_n+s_n\to 0$ (Lemma~\ref{lem:ratio}).

Suppose for contradiction that the full AdaBoost trajectory converges to a cycle of length $\ell$, meaning
\(
D_{k\ell+b}\to\widetilde D_b,
\, (k\to\infty)
\)
for each $b\in\{0,\dots,\ell-1\}$. Each burst has exactly 5 update steps. Recall from Section~\ref{sec:main} that $D_t$ denotes the state at time~$t$; following the convention of~\cite{RSD2012}, the sequence begins at $t=1$, so $D_1$ is the initial distribution. Since $W_0=D_1$ and each burst comprises 5 update steps, the state after $n$ complete bursts is $W_n=D_{5n+1}$.
Set
\(
L=\ell/\gcd(5,\ell).
\)
Then for each residue $r\in\{0,\dots,L-1\}$, the subsequence $W_{kL+r}$ converges to a limit $\widetilde W_r$ as $k\to\infty$.

Let
\[
\delta=\min\{\|Q-Q'\|:Q,Q'\in\{Q_{00},Q_{01},Q_{10},Q_{11}\},\ Q\ne Q'\}>0.
\]
Fix a residue $r$, and abbreviate
$Q_k:=Q_{i_{kL+r},\,j_{kL+r}}$.
Choose $\varepsilon=\delta/4$. Because $W_{kL+r}\to\widetilde W_r$ and $\|W_n-Q_{i_nj_n}\|\to 0$, there is $K_r$ such that for all $k\ge K_r$,
\[
\|W_{kL+r}-\widetilde W_r\|<\varepsilon,
\qquad
\|W_{kL+r}-Q_k\|<\varepsilon.
\]

Now take any $k,k'\ge K_r$. Then
\[
\|Q_k-Q_{k'}\|
\le
\|Q_k-W_{kL+r}\|
+
\|W_{kL+r}-\widetilde W_r\|
+
\|\widetilde W_r-W_{k'L+r}\|
+
\|W_{k'L+r}-Q_{k'}\|
<
4\varepsilon
=
\delta.
\]
But distinct points among $\{Q_{00},Q_{01},Q_{10},Q_{11}\}$ are at least $\delta$ apart. Therefore $Q_k=Q_{k'}$ for all $k,k'\ge K_r$. So the phase label $(i_{kL+r},j_{kL+r})$ is eventually constant on each residue class $r$.

Hence the phase-label sequence $(i_n,j_n)$ is eventually periodic. Since each burst toggles exactly one coordinate of $(i_n,j_n)$---the active gadget's phase, by Section~\ref{sec:sectors}; the inactive gadget is unchanged by Lemma~\ref{lem:invariance}---the burst-winner sequence is determined by the transitions $(i_n,j_n)\to(i_{n+1},j_{n+1})$ and is therefore also eventually periodic. That contradicts Section~\ref{sec:asymptotics}, which showed that the burst-winner sequence has an irrational limiting frequency and therefore cannot be eventually periodic. Therefore the AdaBoost orbit on $M_0$ does not converge to any finite cycle.

\section{Explicit rational start and explicit uniform-start counterexample}
\label{sec:start}

It remains to exhibit a concrete starting point inside the invariant sectors and to convert it into a uniform-start counterexample. We begin by taking
\begin{align*}
A&=(137116847818847,\ 362883152181153,\ 242004611239991,\ 257995388760009), \\
B&=(137116857552466,\ 362883142447534,\ 242004593130510,\ 257995386869490,\ 20000000).
\end{align*}
Since simple arithmetic shows that $\sum_i A_i=\sum_j B_j=10^{15}$, set
\(
a_0=\frac{A}{10^{15}},
\,
b_0=\frac{B}{10^{15}},
\,
W_0=a_0\otimes b_0.
\)

\noindent To apply the sector-based analysis of Sections~\ref{sec:sectors}--\ref{sec:nocycle}, we need to verify that this starting point lies inside the invariant sectors.

The certificate (Item~13 of Appendix~\ref{app:certificate})  proves that the chart images satisfy
$\hat a_0\in S_0^A,\,
\hat b_0\in S_0^B$.
The starting point is decomposed in sector coordinates: we report the principal coordinate $x$ (distance from the 2-cycle along the dominant eigendirection) and the transverse ratios (deviation from the eigendirection, normalized by $x$), confirming that they satisfy the sector constraint $|\text{transverse ratio}|\le\rho$.

More explicitly:
\(
x_A\in[9.999999\times10^{-9},\ 1.0000001\times10^{-8}],
\)
with A-sector residual in
\[
[-2.18994134061382760\times10^{-4},\ 2.19798676726244218\times10^{-4}],
\]
and
\(
x_B\in[1.9733618\times10^{-8},\ 1.9733620\times10^{-8}],
\)
with B $d$-residual in
\[
[-1.10962941057972692\times10^{-4},\ 1.11368115987132873\times10^{-4}],
\]
and B $e$-residual in
\[
[-9.48895873525338312\times10^{-8},\ 2.73218648306331209\times10^{-8}],
\]
all strictly inside the sector slope bound $\rho=10^{-3}$.
Note that $x_A\approx 10^{-8}<1.97\times 10^{-8}\approx x_B$, so $r_0=x_A/x_B<1$ and the first burst is a B-burst.

\begin{lemma}[Row duplication and class aggregation]
\label{lem:duplication}
Let $M'$ be obtained from a matrix $M$ by duplicating rows. Partition the rows of $M'$ into duplicate classes, one class for each row of $M$. Suppose the initial distribution on $M'$ is constant on each duplicate class.

Then:

\begin{enumerate}
\item the AdaBoost orbit on $M'$ remains constant on each duplicate class for all time;
\item the class-summed weights evolve exactly as exhaustive AdaBoost on $M$ started from the corresponding aggregated distribution;
\item $\gamma(M')=\gamma(M)$.
\end{enumerate}
\end{lemma}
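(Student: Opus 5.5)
Set up notation first. Let $M$ have rows $1,\dots,m$. Let $M'$ have rows grouped into classes $C_1,\dots,C_m$, where every row of $C_i$ equals row $i$ of $M$, and put $n_i=|C_i|\ge 1$. For a distribution $D'$ on the rows of $M'$, its aggregate is $\bar D(i)=\sum_{k\in C_i}D'(k)$. The whole proof rests on one identity, which is an exact analogue of Lemma~\ref{lem:marginals}:
\[
(D'^\top M')_j=\sum_i\sum_{k\in C_i}D'(k)M_{ij}=(\bar D^\top M)_j
\quad\text{for every column } j.
\]
This holds for every $D'$, whether or not $D'$ is constant on classes. So the edge vector of $M'$ under $D'$ equals the edge vector of $M$ under $\bar D$. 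In particular, the two runs select the same index $j_t=\min\operatorname{argmax}$ under smallest-index tie-breaking, since the column sets and orderings coincide, and they use the same $r_t$.

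Items 1 and 2 follow together by induction on $t$. Assume $D'_t$ is constant on each class, so $D'_t(k)=\bar D_t(i)/n_i$ for $k\in C_i$, and assume $\bar D_t$ equals the AdaBoost state $D_t$ on $M$. By the identity, both runs pick the same $j_t$ and $r_t$. For $k\in C_i$ the update factor is $1/(1+r_tM_{ij_t})$, which depends only on the class. Hence $D'_{t+1}$ is again constant on classes, and summing over $C_i$ gives
\[
\bar D_{t+1}(i)=\frac{\bar D_t(i)}{1+r_tM_{ij_t}}=D_{t+1}(i).
\]
Item 1 only needs the update factor to be class-dependent, so it does not even require the hypothesis beyond the base case. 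Normalization is automatic on both sides, because AdaBoost's denominators already make the updated vectors sum to one.

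For item 3, the map $D'\mapsto\bar D$ sends $\Delta^{|M'|-1}$ onto $\Delta^{m-1}$. It is surjective because any $\bar D$ can be realized by spreading each $\bar D(i)$ uniformly over $C_i$. By the identity, $\max_j(D'^\top M')_j=\max_j(\bar D^\top M)_j$. Taking the minimum over $D'$ is therefore the same as taking the minimum over $\bar D$, so $\gamma(M')=\gamma(M)$. This is the same argument as in Section~\ref{sec:margin}.

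No step is a real obstacle. The only point needing care is checking that tie-breaking and the well-definedness of the denominators transfer: $1+r_tM_{ij_t}$ is identical in both runs, so it is nonzero in one run exactly when it is nonzero in the other. This is what makes the correspondence exact rather than merely approximate.
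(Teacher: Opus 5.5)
Your proposal is correct and follows essentially the same route as the paper. Both arguments rest on the same three facts. The edge vector of $D'$ on $M'$ equals that of the class aggregate $\bar D$ on $M$, so both runs select the same column. The update factor depends only on the class, which gives items 1 and 2. Aggregation, together with the uniform lift within each class, gives $\gamma(M')=\gamma(M)$; you phrase this as surjectivity of the aggregation map, where the paper states it as two inequalities.
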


\begin{proof}
Identical rows have the same sign in every column, so at every AdaBoost step they are multiplied by the same factor $1/(1+r_t h_i)$. Hence equality within each duplicate class is preserved forever.

Let $\bar D_t$ be the distribution obtained by summing $D_t'$ over each duplicate class. Because every row in a given class has the same entries in every column, the edge vector of $D_t'$ equals the edge vector of $\bar D_t$. Therefore the selected column is the same for $D_t'$ and $\bar D_t$, and the class sums satisfy exactly the same update formula as exhaustive AdaBoost on $M$.

For the margin, every distribution on $M'$ aggregates to a distribution on $M$ with the same edge vector, so $\gamma(M')\ge \gamma(M)$. Conversely, any distribution on $M$ lifts to a class-constant distribution on $M'$ with the same edge vector, so $\gamma(M')\le \gamma(M)$. Hence $\gamma(M')=\gamma(M)$.
\end{proof}

Now construct $\widetilde M$ by replacing
each row $(i,j)$ of $M_0$ by
\(
A_iB_j
\)
identical copies. 
Since $\sum_i A_i\sum_j B_j=10^{30}$, the exact uniform distribution on $\widetilde M$ aggregates to $W_0$. 
By Lemma~\ref{lem:duplication}, 
the class-aggregated weights of the uniform-start AdaBoost orbit on $\widetilde M$ 
correspond exactly to the
weights on the
$W_0$-started orbit on $M_0$. 
Since the latter does not converge to a finite cycle, neither does the former.

By Lemma~\ref{lem:duplication} and Section~\ref{sec:margin},
$\gamma(\widetilde M)=\gamma(M_0)=\frac15$.
This completes the proof of Theorem~\ref{thm:main}. \qed

\begin{proof}[Proof of Corollary~\ref{cor:negation}]
Set $\widetilde M' = [\widetilde M \mid -\widetilde M]$, where the columns of $-\widetilde M$ are appended after those of $\widetilde M$. By construction, the column set of $\widetilde M'$ is closed under negation.

We verify that no negated column is ever selected, so that the dynamics on $\widetilde M'$ are identical to those on $\widetilde M$. By Lemma~\ref{lem:duplication}, the edge vectors on $\widetilde M$ equal those on $M_0$ at every step, so it suffices to verify on $M_0$ that the selected column edge strictly exceeds every negated column edge at every step. The certificate does so by exact interval arithmetic on the full sector boxes (items 15--16 of Appendix~\ref{app:certificate}). During any burst, two types of negated columns could compete: those from the active gadget and those from the frozen (non-active) gadget. 

For the frozen gadget, Lemma~\ref{lem:marginals} implies its column edges do not change during the burst; the certificate encloses all frozen-gadget column edges on each sector box. For the active gadget, at each of the five steps within each burst word on each sector box, the certificate (Item~15 of Appendix~\ref{app:certificate})
encloses all column edges. At every step of every burst/frozen-phase combination, the certificate (Item~15 of Appendix~\ref{app:certificate}) verifies that the selected edge strictly exceeds every negated column edge. The minimum certified gap over all eight burst/frozen-phase combinations and all five steps is
\[
g_{\mathrm{neg}}^{\mathrm{box}}\ge 0.274233539263122735>0.
\]
Hence no negated column is ever in the argmax set, so the dynamics on $\widetilde M'$ are identical to those on $\widetilde M$, which by the Theorem does not converge to any finite cycle.

For the margin, the minimax distributions
$u_*^A=(1/5,\allowbreak 2/5,\allowbreak 1/5,\allowbreak 1/5)$ on~$L_A$ and
$u_*^B=(1/5,\allowbreak 2/5,\allowbreak 1/5,\allowbreak 1/5,\allowbreak 0)$ on~$L_B$
each give all four column edges equal to exactly~$1/5$. Hence every negated column edge is~$-1/5$, and $\gamma(\widetilde M')=\gamma(\widetilde M)=1/5$.
\end{proof}

\appendix

\section{Appendix}
\subsection{Additional results}
\label{aux}

\begin{lemma}[Balancing after update {\cite{SF2012}}]
\label{lem:landing}
A single AdaBoost update on a selected column $h\in\{\pm1\}^m$ sends the total post-update mass  on the $h=+1$ rows and on the $h=-1$ rows to exactly $1/2$ each.
\end{lemma}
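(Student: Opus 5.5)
The plan is a direct computation from the update rule. Fix a distribution $D\in\Delta^{m-1}$ and a selected column $h\in\{\pm1\}^m$. Write the edge as $\mu=D^\top h$. Split $D$ into
\[
W_+=\sum_{h_i=+1}D(i),\qquad W_-=\sum_{h_i=-1}D(i),
\]
so that $W_++W_-=1$ and $\mu=W_+-W_-$. Hence
\[
W_+=\frac{1+\mu}{2},\qquad W_-=\frac{1-\mu}{2}.
\]

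Next apply the update $D'(i)=D(i)/(1+\mu h_i)$. Every $+1$ row is divided by $1+\mu$, and every $-1$ row is divided by $1-\mu$. Summing over each class gives
\[
\sum_{h_i=+1}D'(i)=\frac{W_+}{1+\mu}=\frac12,\qquad \sum_{h_i=-1}D'(i)=\frac{W_-}{1-\mu}=\frac12.
\]
As a byproduct, this shows that the update as written is already normalized, with total mass $1$. So the formula in Section~\ref{sec:main} really defines a map into $\Delta^{m-1}$, and no separate normalization constant is needed.

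The only point needing care is the requirement $|\mu|<1$, so that both divisors $1\pm\mu$ are nonzero. There are two ways to secure this. If $D$ is strictly positive and $h$ contains both signs, then $|\mu|<1$ by the argument in Lemma~\ref{lem:regularity}. In the exhaustive setting there is a second route: $\mu=r_t\ge\gamma>0$ rules out $\mu=-1$, and $\mu=1$ would mean $W_-=0$, a degenerate case excluded on the interior states used in the paper. I expect this degeneracy check to be the only, and minor, obstacle.

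Finally, to apply the lemma as in Section~\ref{sec:gadgets}, take $h$ to be column $2$ of $L_A$ or $L_B$, since this is the last step of either half-word. This yields $a+b=\tfrac12$, together with $c+d=\tfrac12$ for $L_A$ or $c+d+e=\tfrac12$ for $L_B$.
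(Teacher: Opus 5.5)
Your proposal is correct and follows essentially the same route as the paper's proof. Like the paper, you write the pre-update $+1$-mass as $(1+\mu)/2$ and the $-1$-mass as $(1-\mu)/2$, then divide by $1\pm\mu$ to get $\tfrac12$ on each side. Your additional remarks on $|\mu|<1$ and on the update being self-normalizing are sound refinements that the paper leaves implicit.
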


\begin{proof}
If $\mu=\sum_i w_i h_i$, then the exhaustive AdaBoost update is
\(
w_i'=\frac{w_i}{1+\mu h_i}.
\)
Therefore
\[
\sum_{h_i=+1} w_i'
=
\frac{\sum_{h_i=+1} w_i}{1+\mu}
=
\frac{(1+\mu)/2}{1+\mu}
=
\frac12,
\]
and similarly $\sum_{h_i=-1}w_i'=1/2$.
\end{proof}

\subsection{What the certificate proves}
\label{app:certificate}

The certificate proves all of the following finite assertions.

\begin{enumerate}
\item The explicit branch maps $F_0^A,F_1^A,F_0^B,F_1^B$ are derived symbolically from $H_0,H_1$, and the displayed numerator/denominator polynomials $(P_A,Q_A,D_A,R_A,P_B,Q_B,D_B,R_B,E_B)$ are verified to match the derived rational functions term-for-term.
\item $F_0^B|_{e=0}=F_0^A$, $F_1^B|_{e=0}=F_1^A$, and both B third-coordinate numerators are exactly divisible by $e$.
\item The 2-cycle equations reduce to $N_1=N_2=0$; the displayed polynomials $N_1,N_2$ are verified to match the derived expressions term-for-term. Elimination gives the exact degree-7 polynomial $f$, and the root interval for $p_*$ is unique.
\item The exact polynomial formula for $d_*$ is correct modulo $f$.
\item The exact elimination polynomial for $\lambda$ is the irreducible degree-14 polynomial above.
\item The exact elimination polynomial for $\kappa$ is the irreducible degree-7 polynomial above.
\item The isolating intervals for $\lambda,\nu,\kappa$ each contain exactly one root.
\item The exact dominant half-step directions $v_i^A,v_i^B$ are certified by explicit rational interval enclosures of $\theta_A,\theta_{B,0},\chi_B,\theta_{B,1}$.
\item The half-step contractions $c_A,c_B$ are certified and satisfy $c_A<1$, $c_B<1$.
\item The explicit sectors $S_i^A,S_i^B$ with $r=2.5\times10^{-8}$ and $\rho=10^{-3}$ are certified by interval Jacobians and interval Hessian bounds, using the exact first-column structure from Lemma~\ref{lem:firstcol}.
\item The local branch words $H_0,H_1$ are certified on the full phase boxes covering those sectors.
\item The whole-burst lock margin is certified.
\item The explicit rational starting point $W_0=a_0\otimes b_0$ is certified: its chart image satisfies $\hat a_0\in S_0^A$, $\hat b_0\in S_0^B$.
\item No floating-point number is used in any proof assertion. Floating-point/decimal conversion is used only for human-readable printing of already-certified rational intervals.
\item No negated column is ever selected on $[\widetilde M\mid -\widetilde M]$: the certificate computes, by exact interval arithmetic on the full sector boxes covering all eight burst/frozen-phase combinations and all five burst steps, interval enclosures of every column edge at every step, and verifies that the selected edge strictly exceeds every negated column edge. The certified minimum gap is $g_{\mathrm{neg}}^{\mathrm{box}}\ge 0.274233539263122735>0$.
\item The margin $\gamma([\widetilde M\mid -\widetilde M])=1/5$ is certified: the explicit minimax distributions $u_*^A=(1/5,2/5,1/5,1/5)$ and $u_*^B=(1/5,2/5,1/5,1/5,0)$ are verified to give all column edges exactly $1/5$ on $L_A$ and $L_B$ respectively, so every negated column edge is $-1/5$.
\end{enumerate}

\noindent In the certificate, items 1--14 are verified by \texttt{adaboost\_cert\_core.py}, and items 15--16 are verified by \texttt{adaboost\_cert\_negcols.py}. Running \texttt{adaboost\_cert\_full.py} executes both scripts and prints SHA-256 digests for reproducibility.

\section{Discussion of collaboration with AI}

One of the primary goals of this research was to push the limits of what can be achieved with large language models (LLMs) for novel mathematical research. While many recent efforts have targeted open problems in mathematics, there have been fewer notable results in other areas of the mathematical sciences, such as theoretical computer science. We searched for problems on SolveAll~\cite{solveall2026}---a curated collection of open research problems spanning machine learning---and selected this problem. 

The bulk of the research took roughly two weeks of effort. We began by posing the problem to various frontier language models as a single-shot task. Using the problem statement from SolveAll (shown below), we prompted GPT-5.4 Pro, Claude Opus 4.6, and Gemini Deep Think to solve the problem.

\begin{aibox}{Problem statement (SolveAll)}
Let $S=\{(x_i,y_i)\}_{i=1}^m$ be a fixed binary-labeled dataset with $y_i\in\{-1,+1\}$, and let $\mathcal H$ be a weak-hypothesis class. Consider the discrete AdaBoost update of Freund \& Schapire (1997), in the exhaustive weak-learner regime used in Rudin et al.\ (2012), with weight vectors $w_t\in\Delta^{m-1}$ (the probability simplex). At iteration $t$, choose
\[
h_t\in\arg\max_{h\in\mathcal H}\sum_{i=1}^m w_{t,i}\,y_i\,h(x_i),
\]
define weighted error
\[
\varepsilon_t=\sum_{i=1}^m w_{t,i}\,\mathbf{1}\{h_t(x_i)\neq y_i\},
\]
and update with
\[
\alpha_t=\frac{1}{2}\log\frac{1-\varepsilon_t}{\varepsilon_t},\qquad
w_{t+1,i}=\frac{w_{t,i}\exp(-\alpha_t y_i h_t(x_i))}{Z_t},
\]
where $Z_t$ normalizes to $\sum_i w_{t+1,i}=1$.

Equivalently, with finite hypothesis set $\mathcal H=\{\tilde h_1,\dots,\tilde h_N\}$ and matrix $M\in\{-1,+1\}^{m\times N}$ defined by $M_{ij}=y_i\tilde h_j(x_i)$, step $t$ selects
\[
j_t\in\arg\max_{j\in[N]}(w_t^\top M)_j,\qquad h_t=\tilde h_{j_t}.
\]
As specified in Rudin et al.\ (2012), if this argmax is not unique, ties are broken in a fixed deterministic way (for concreteness: pick the smallest index $j$). The generic no-tie condition means the argmax is unique at every iterate, i.e.\
\[
(w_t^\top M)_j\neq (w_t^\top M)_{j'}\quad\text{for all }j\neq j'\text{ and all }t,
\]
equivalently, $w_t$ never lands on a tie boundary between weak-hypothesis regions of the simplex.

This induces a discrete dynamical system $T\colon\Delta^{m-1}\to\Delta^{m-1}$ by $w_{t+1}=T(w_t)$.

\medskip\noindent\textbf{Unsolved problem target:}
Does every trajectory converge to a cycle under natural genericity conditions?
\end{aibox}

It quickly became apparent that the most sophisticated arguments were produced by GPT-5.4 Pro. Opus 4.6 regularly refused to engage because it recognized the problem as open, and Deep Think consistently returned solutions with numerous errors and logical flaws pointed out by both Opus 4.6 and other copies of itself. We therefore focused on generating solution attempts with GPT-5.4 Pro.

Single-shot prompts often ended in partial reductions or failed attempts. However, instead of using the API, we used the web interface to interact with GPT, which more natively supported multi-turn conversations. This allowed us to push the model to try again even if it refused to solve the problem in its first attempt, often by following up with the prompt, ``Good progress. However, you must keep pushing to resolve Rudin, Schapire, and Daubechies's problem. The only goal is a correct and complete resolution to the problem.''

After many failed attempts---roughly 10 of them---we began to compile previous results from the model into a condensed summary with Claude Opus 4.6, asking it to analyze the various attempts and suggest the most promising directions forward. We then conducted more rounds of summarizing previous attempts and generating new directions to pursue with Claude's recommendations centered on testing additional reductions or simplified cases, none of which proved fruitful when passed back to GPT. However, at around the fifth summarization attempt, Claude produced the following review and list of potential directions.

\begin{aibox}{Summary and proposed directions (Claude Opus 4.6)}

\noindent\textbf{\large Prior work from our research team}

\medskip
\noindent We have done extensive work on this problem. Below is a summary of what has been established, organized by its actual usefulness toward a full resolution.

\paragraph{Critical correction: the right margin condition.}
One collaborator identified that much of our earlier work used the wrong margin quantity. The quantity $\rho(M) = \max_{d \in \Delta^{m-1}} \min_{j \in [N]} (M^\top d)_j$ (which asks for a distribution over \emph{examples} making all hypothesis edges positive) is \textbf{not} the relevant weak-learning margin. The correct quantity is the minimax game value:
\[
\gamma(M) = \min_{w \in \Delta^{m-1}} \max_{j \in [N]} (w^\top M)_j = \max_{\lambda \in \Delta^{N-1}} \min_{i \in [m]} (M\lambda)_i.
\]
This is the quantity guaranteeing AdaBoost's chosen edge is uniformly positive. One can make $\rho > 0$ for any matrix by appending an all-positive row, so ``$\rho > 0$'' does not isolate the hard case. Any proof strategy must use $\gamma(M) > 0$.

\paragraph{Useful: clarification that ``eventual periodicity'' is the wrong target.}
An explicit counterexample to \emph{literal} eventual periodicity was constructed. Consider 5 hypotheses with negative-set family $\{0,2\}, \{1,2\}, \{2,3\}, \{2,4\}, \{0,1,3\}$, giving margin $\rho = 0$. On a carefully constructed invariant region, generic no-tie orbits satisfy $q_n \sim 9/(16n) \to 0$---they converge to a boundary fixed point but never literally repeat. Similarly, for $N = 3$ under the genuine weak-learning margin, if an all-positive row $(+1,+1,+1)$ carries initial mass $d_0 > 0$, then $d_t \to 0$ exponentially but never reaches zero, so the orbit never literally enters a finite cycle even though it converges phasewise to a 3-cycle.

\textbf{Consequence:} The right conjecture is \emph{phasewise convergence to a cycle} (i.e., convergence of the weight vector to a periodic orbit after quotienting out vanishing components), not literal eventual entry into a repeating sequence of weight vectors.

\paragraph{Useful: exact reformulation as coordinate descent.}
After aggregating examples by their sign pattern $r = (M_{i1}, \ldots, M_{iN}) \in \{-1,+1\}^N$, with aggregate mass $w_t(r)$, the dynamics takes the exponential-family form:
\[
w_t(r) = \frac{q_r e^{-\lambda_t \cdot r}}{\sum_s q_s e^{-\lambda_t \cdot s}}, \qquad \lambda_{t+1} = \lambda_t + \operatorname{artanh}(\mu_{j_t}(\lambda_t))\, e_{j_t},
\]
where $\mu_j(\lambda) = \sum_r w_\lambda(r)\, r_j$ is the edge of hypothesis $j$, and $j_t \in \arg\max_j \mu_j(\lambda_t)$.

This is \textbf{greedy (Gauss--Southwell) coordinate descent} on the convex potential
\[
F(\lambda) = \log \sum_r q_r e^{-\lambda \cdot r}.
\]

\paragraph{Useful: toric invariant structure.}
Let $C \in \{0,1\}^{(N+1) \times q}$ be the extended incidence matrix (first row all-ones, remaining rows recording which hypotheses each sign-pattern gets wrong). For every $z \in \ker C$, the monomial $I_z(x) = \prod_\ell x_\ell^{z_\ell}$ is exactly invariant under every branch map $T_j$. So orbits are confined to toric leaves. This is an exact structural fact, not an approximation.

\paragraph{Useful: periodic orbits carry canonical equal-weight vectors.}
If $x_0$ lies on a period-$p$ orbit with step sizes $\beta_t$, define cumulative beta $\Gamma_j = \sum_{t: j_t = j} \beta_t$. Then $\sum_{j \in A} \Gamma_j$ is constant across all support blocks $A$. So every periodic orbit determines a positive vector in an ``equal-weight cone''---the beta-weighted visit frequencies must balance across all active support sets.

\paragraph{Correct but low strategic value: $N = 3$ theorem.}
Under the genuine weak-learning margin $\gamma(M) > 0$ plus nondegeneracy, for $N = 3$ the only possible sign patterns are the three singleton-negative rows plus an optional all-positive row. Errors simplify to $\varepsilon_j = x_j$, the itinerary locks to cyclic order from round~1, and the return map is projective-linear. Perron--Frobenius gives phasewise convergence to a 3-cycle. This is a complete theorem, but the proof mechanism (everything becomes linear) is entirely specific to $N = 3$ and gives no leverage on $N \geq 4$.

\paragraph{Not useful toward a resolution: $N = 4$ classification.}
The team exhaustively classified all 13 feasible active families at $N = 4$ (up to relabeling), found 5 that cover all 4 coordinates, and showed only one is irreducible and multi-negative: $\{4, 12, 13, 23\}$. For that family, an explicit 2D return map in sorted ratio coordinates $(P, R)$ was analyzed: transient branches $B, C$ were shown to increase $R$, and the eventual regime is either $A^\infty$ (converging to a 6-cycle) or $(DA)^\infty$ (converging to a 10-cycle, proved via contraction with $\|DH\|_\infty \leq 0.65$).

This is mathematically correct but \textbf{strategically a dead end}: you cannot close the problem by classifying one $N$ at a time when the number of families and possible cycle periods both grow without bound (periods up to 219 were observed at $N = 7$).

\paragraph{Not useful: $N = 5$ enumeration and computational scans.}
47 candidate families were enumerated at $N = 5$, with 22 numerically retaining all 5 coordinates. All tested orbits cycled. Six distinct asymptotic direction patterns and periods $\{5,6,7,8,9,10,12,14\}$ were observed. This is computational evidence only and provides no path to proof.

\paragraph{Not useful: repeated reformulations.}
Across the prior work, the dynamics was restated as: KL projection / Gauss--Southwell, weighted-homogeneous quotient maps, greedy birational systems on antichain hypergraphs, toric-leaf dynamics, multiaffine polynomial replacement. These are all the same object in different coordinates. None introduces a genuinely new proof technique.

\bigskip
\hrule
\bigskip

\noindent\textbf{\large What is needed: a top-down argument for all $N$}

\medskip
\noindent The prior work has been almost entirely bottom-up case analysis. The problem almost certainly requires a structural argument that works at all $N$ simultaneously. Here are the most promising directions, in decreasing order of tractability:

\paragraph{Direction 1: Exploit the coordinate-descent / potential structure.}
AdaBoost is greedy coordinate descent on $F(\lambda) = \log \sum_r q_r e^{-\lambda \cdot r}$, which is convex and real-analytic. The step sizes are exact: $\Delta\lambda_j = \operatorname{artanh}(\mu_j(\lambda))$. This is not generic coordinate descent---the step size is the \emph{exact} one-dimensional minimizer along each coordinate axis.

Key properties to exploit:
\begin{itemize}
\item $F$ decreases by $-\log(2\sqrt{\varepsilon_t(1-\varepsilon_t)})$ at each step, which is bounded below by $-\log(1 - \gamma^2/2) > 0$ when $\gamma(M) > 0$.
\item The gradient $\nabla F(\lambda) = -\mu(\lambda)$, where $\mu_j(\lambda)$ is the edge of hypothesis $j$ under $w_\lambda$.
\item After each exact coordinate minimization along axis $j$, the new edge satisfies $\mu_j(\lambda^+) = 0$---the selected coordinate's gradient is zeroed out.
\item The sublevel sets of $F$ are compact (since $\gamma > 0$ implies $F(\lambda) \to +\infty$ as $\|\lambda\| \to \infty$\ldots\ actually, $F$ is bounded below and its sublevel sets need careful analysis).
\end{itemize}

\textbf{The key question}: for exact coordinate descent on a real-analytic convex function where each step zeros out one partial derivative, can you prove eventual periodicity of the coordinate selection sequence?

This connects to the literature on cyclic and greedy coordinate descent. Relevant results: Beck and Tetruashvili (2013) on convergence rates, Nutini et al.\ (2015) on greedy coordinate descent, and the general theory of nonlinear Gauss--Seidel methods. The specific structure here (exact minimization, real-analyticity, finite partition of the domain into greedy-selection regions) may admit a proof via the \L{}ojasiewicz inequality or related real-analytic geometry.

\paragraph{Direction 2: Real-analytic dynamics and the branch partition.}
The simplex is partitioned into finitely many open regions $\Omega_j = \{w : (w^\top M)_j > (w^\top M)_k \; \forall k \neq j\}$, and on each region $T$ is a fixed real-analytic map $T_j$. The question is whether the symbolic itinerary $j_0, j_1, j_2, \ldots$ is eventually periodic.

After quotienting by the toric invariants, orbits live on a compact real-analytic manifold of dimension at most $N - 1$. The return map on each branch is a rational map. The set of accumulation points of a non-periodic orbit would be a compact invariant set, and its structure is constrained by the real-analyticity of each branch.

A possible approach: show that the $\omega$-limit set of any orbit is a finite union of periodic orbits. For piecewise real-analytic maps, this might follow from showing that the orbit can only visit finitely many distinct branch sequences (using that each branch map is a contraction in some metric, or that the potential $F$ provides a strict Lyapunov function for the branch dynamics).

\paragraph{Direction 3: Contraction of composite return maps.}
In the $N = 4$ case, the team found that the two-step return map $H = T_A \circ T_D$ is a strict contraction ($\|DH\|_\infty \leq 0.65$). Is this an accident of low dimension, or is there a general mechanism?

For any eventually periodic branch word $\sigma = (j_0, \ldots, j_{p-1})$, the return map is $R_\sigma = T_{j_{p-1}} \circ \cdots \circ T_{j_0}$. Each $T_j$ maps the selected coordinate's error to exactly $1/2$ and rescales everything else. The conjecture that all such return maps are eventually contracting (in some norm, on the relevant toric leaf) would immediately imply convergence to a unique periodic orbit for each branch word, and then finiteness of realizable branch words would give eventual periodicity.

\paragraph{Direction 4: The $\mu_j = 0$ post-condition as a reset mechanism.}
After selecting hypothesis $j$, the new weight vector satisfies $\mu_j(\lambda^+) = 0$, i.e., hypothesis $j$ has zero edge. This is a strong constraint: it means the orbit is always being projected onto a coordinate hyperplane of the edge space. The itinerary question becomes: starting from a point where $\mu_j = 0$, which coordinate $k$ has the largest edge, and after zeroing $\mu_k$, does the sequence of zeroed coordinates eventually repeat?

This ``reset'' structure---each step annihilates one coordinate of the gradient---is very specific and may admit a combinatorial/topological argument about the sequence of active constraints.

\bigskip
\hrule
\bigskip

\noindent\textbf{Your task.}
You have the full problem statement, the correct margin condition ($\gamma(M) > 0$), the known structural facts (coordinate descent on $F$, toric invariants, $\mu_j = 0$ post-condition, canonical beta vectors on periodic orbits), and four promising directions for a general proof.

Focus on whichever direction you find most tractable. Be careful, precise, and rigorous. We need a complete resolution. Remember, the only thing we care about is producing a novel and correct solution to the problem.
\end{aibox}

\noindent GPT then provided the following response that, while not a solution, led Claude to propose pursuing a disproof via a counterexample.

\begin{aibox}{Structural reduction of the cycling problem (GPT-5.4 Pro)}

The Rudin--Daubechies--Schapire papers already show that AdaBoost admits genuine low-dimensional cyclic dynamics, and the COLT open problem asks for a full characterization of whether the weight distributions converge to a cycle in general.

I do not have a certified full resolution of the general problem. What I can certify is a sharp structural reduction. It isolates the real obstruction and shows exactly what a complete proof or disproof now has to control.

\paragraph{1.\ The compact analytic-descent route cannot work.}
Let
\[
F(\lambda)=\log\sum_r q_r e^{-\lambda\cdot r},
\qquad
\mu_t=\max_j \mu_j(\lambda_t),
\qquad
\beta_t=\operatorname{artanh}(\mu_t).
\]
Under the correct weak-learning margin assumption $\gamma(M)>0$, every selected edge satisfies
\[
\mu_t\ge \gamma>0,
\]
hence
\[
\beta_t\ge \beta_*:=\operatorname{artanh}(\gamma)>0.
\]
Because AdaBoost performs the exact coordinate minimizer,
\[
F(\lambda_{t+1})-F(\lambda_t)=\log Z_t
= \tfrac12\log(1-\mu_t^2)
\le \tfrac12\log(1-\gamma^2)<0.
\]
Therefore
\[
F(\lambda_t)\le F(\lambda_0)+ t\cdot\tfrac12\log(1-\gamma^2)\to -\infty
\]
linearly, and also
\[
\|\lambda_t\|_1\ge t\,\beta_*.
\]

So the orbit does \textbf{not} remain in a compact sublevel set of $F$. Any proof strategy based on ordinary real-analytic coordinate-descent convergence to a stationary point is therefore mis-aimed. The asymptotic object is necessarily a drift to infinity with a finite-dimensional transverse dynamics riding on top of it.

That is the first rigorous conclusion: the problem lives at infinity, not in a compact analytic basin.

\paragraph{2.\ AdaBoost admits an exact asynchronous product decomposition.}
This is the clean top-down structure.

Suppose the sign-pattern space splits as a product
\[
\mathcal R=\mathcal R_A\times \mathcal R_B
\]
and the base masses factor as
\[
q(r_A,r_B)=q_A(r_A)\,q_B(r_B).
\]
Write $\lambda=(\lambda^A,\lambda^B)$. Then
\[
F(\lambda^A,\lambda^B)
= \log\sum_{r_A,r_B} q_A(r_A)\,q_B(r_B)\,
e^{-\lambda^A\cdot r_A-\lambda^B\cdot r_B}
= F_A(\lambda^A)+F_B(\lambda^B).
\]
Consequently
\[
w_\lambda(r_A,r_B)=w_{\lambda^A}^A(r_A)\,w_{\lambda^B}^B(r_B),
\]
and every edge in block $A$ depends only on $\lambda^A$, every edge in block $B$ only on $\lambda^B$.

Hence the full AdaBoost dynamics on the product system is exactly:
\begin{itemize}
\item compute the current best local edge in subsystem $A$,
\item compute the current best local edge in subsystem $B$,
\item update only the subsystem whose local best edge is larger.
\end{itemize}

So global AdaBoost is closed under \textbf{asynchronous competition of local AdaBoost subsystems}.

This is the key reduction. Any universal theorem must survive such products. Any counterexample can be built by finding two local subsystems whose competition produces a nonperiodic symbolic schedule.

\paragraph{3.\ The correct local invariant near a periodic orbit.}
Now fix a local subsystem with a period-$p$ orbit
\[
x_0^*,\,x_1^*,\,\dots,\,x_{p-1}^*,
\]
selected coordinates $j_0,\dots,j_{p-1}$, step sizes $\beta_0,\dots,\beta_{p-1}$, and define
\[
\Gamma:=\sum_{t=0}^{p-1}\beta_t e_{j_t}\in\mathbb R^N.
\]

For any active sign pattern $r$ on the cycle support, the exact $p$-step update gives
\[
x_{t+p}(r)=x_t(r)\frac{e^{-r\cdot\Gamma}}{\prod_{\tau=0}^{p-1} Z_\tau}.
\]
Since $x_{t+p}(r)=x_t(r)$ on the cycle, every active $r$ satisfies
\[
r\cdot\Gamma = c,
\qquad
c:=-\sum_{\tau=0}^{p-1}\log Z_\tau.
\]
This recovers the equal-weight relation in the strongest useful form: the whole cycle is encoded by the single vector $\Gamma$, and the active support lies on the affine hyperplane $r\cdot\Gamma=c$.

Now append one extra sign pattern $s$ with tiny mass $u$. Along the affine slice
\[
x_t(u)=(1-u)\,x_t^*+u\,\delta_s,
\]
the current edge of coordinate $j$ is exactly
\[
\mu_{t,j}(u)=(1-u)\,\mu_{t,j}^*+u\,s_j
=\mu_{t,j}^*+(s_j-\mu_{t,j}^*)\,u.
\]
In particular, for the selected coordinate $j_t$,
\[
\mu_t(u)=\mu_t^*+(s_{j_t}-\mu_t^*)\,u.
\]

After one full period, the extra mass is multiplied by
\[
u' = e^{c-s\cdot\Gamma}\,u + O(u^2).
\]

So every extra sign pattern produces a scalar transverse mode whose:
\begin{itemize}
\item instantaneous score correction is $s_{j_t}-\mu_t^*$,
\item full-period contraction exponent is $c-s\cdot\Gamma$.
\end{itemize}

If the visible cycle is hyperbolically attracting on its toric leaf, this scalar mode persists on an analytic invariant graph.

This is the exact local normal-form data. It is the right replacement for generic ``coordinate descent on an analytic function.''

\paragraph{4.\ The exact normal form that would disprove phasewise cycling.}
Here is the decisive reduction.

Assume we have two factorized local gadgets $A,B$, each near a phase-synchronized local cycle, with scalar mode coordinates $u,v>0$, and suppose on the chosen section their current best edges are
\[
E_A=\bar\mu + a_A u,
\qquad
E_B=\bar\mu + a_B v,
\qquad a_A,a_B>0,
\]
while selecting $A$ multiplies $u$ by $\kappa_A\in(0,1)$ and leaves $v$ unchanged, and selecting $B$ multiplies $v$ by $\kappa_B\in(0,1)$ and leaves $u$ unchanged.

Set
\[
x=\log\frac{a_A u}{a_B v},
\qquad
\alpha=-\log\kappa_A,
\qquad
\beta=-\log\kappa_B.
\]
Choose the canonical representative $x\in[-\beta,\alpha)$. Then the branch rule is:
\[
x\mapsto
\begin{cases}
x-\alpha, & x\in[0,\alpha),\\[1mm]
x+\beta, & x\in[-\beta,0).
\end{cases}
\]
This map is exactly conjugate to rigid rotation by $\beta$ on a circle of length $\alpha+\beta$.

Therefore:
\begin{itemize}
\item if $\beta/(\alpha+\beta)\in\mathbb Q$, the symbolic itinerary is periodic;
\item if $\beta/(\alpha+\beta)\notin\mathbb Q$, the symbolic itinerary is Sturmian and \textbf{not eventually periodic}.
\end{itemize}

So a complete disproof of universal phasewise cycling reduces to finding one genuine AdaBoost realization of this scalar two-mode normal form with irrational rotation number.

That is a rigorous theorem, not a heuristic.

\paragraph{5.\ Why the simplest counterexample mechanism does not work.}
There is also a rigorous obstruction.

Suppose a scalar mode comes from appending a passive sign pattern $s$ that is correct on \textbf{every} hypothesis actually visited on the visible cycle, i.e.\
\[
s_j=+1 \quad \text{for all } j \text{ with } \Gamma_j>0.
\]
Then
\[
s\cdot\Gamma=\sum_{\Gamma_j>0}\Gamma_j,
\]
so its contraction factor is
\[
\kappa_s=e^{c-s\cdot\Gamma}
= e^{c-\sum_{\Gamma_j>0}\Gamma_j},
\]
which is independent of $s$.

Thus two ``always-correct passive rows'' can never supply two distinct exponents $\kappa_A\neq\kappa_B$. The naive plan
\begin{itemize}
\item take a constant-edge local cycle,
\item append two harmless always-correct rows,
\item form a product,
\item get irrational rotation,
\end{itemize}
cannot succeed.

This is why the obvious symmetric examples do not immediately generate a counterexample. Any disproof has to use either sign-changing passive modes or intrinsic stable modes of the visible cycle itself.

\paragraph{6.\ What this reduces the original problem to.}
The problem is no longer ``analyze all global orbits directly.'' It reduces to a local spectral classification problem at infinity.

A full \textbf{negative} solution would follow from one explicit hyperbolic local cycle with two scalar stable modes whose competition realizes the rotation normal form above.

A full \textbf{positive} solution would have to prove that no such mode pair can ever occur in any factorized product of hyperbolic local cycles.

So the correct top-down statement is:

\begin{quote}
\emph{The AdaBoost cycling problem is equivalent to ruling in or ruling out irrational circle-rotation codings generated by stable transverse modes of local cycles in asynchronous product systems.}
\end{quote}

That is the strongest rigorous reduction I can certify from the material supplied plus the structural calculations above.

The unresolved step is explicit realization or universal exclusion of that rotation normal form inside the actual AdaBoost class. The next mathematically sharp target is to classify constant-edge hyperbolic cycles and their stable scalar modes; that is the narrow bottleneck now.
\end{aibox}

\noindent After being asked to provide another summary, Claude returned the analysis below and then posed two possible next steps:

\begin{aibox}{Summary (Claude Opus 4.6)}
\label{Claude_Summary}

\paragraph{Prior work from our research team}

We have done extensive work. Below is everything established so far, filtered to what is genuinely useful.

\paragraph{1.\ The correct margin condition}

The relevant weak-learning margin is:
\[
\gamma(M) = \min_{w \in \Delta^{m-1}} \max_{j \in [N]} (w^\top M)_j = \max_{\lambda \in \Delta^{N-1}} \min_{i \in [m]} (M\lambda)_i.
\]
Some of our earlier work incorrectly used $\rho(M) = \max_{d \in \Delta^{m-1}} \min_{j \in [N]} (M^\top d)_j$, which can be made positive for any matrix by appending an all-positive row and therefore does not isolate the hard case. Everything below uses $\gamma(M) > 0$.

\paragraph{2.\ The right target is phasewise convergence, not literal periodicity}

An explicit counterexample to \emph{literal} eventual periodicity exists: a $\rho = 0$ system on 5 hypotheses where generic no-tie orbits satisfy $q_n \sim 9/(16n) \to 0$, converging to a boundary fixed point but never repeating. Similarly, for $N = 3$ with an all-positive row carrying mass $d_0 > 0$, $d_t \to 0$ exponentially but never reaches zero. So the conjecture should be understood as phasewise convergence to a cycle, not literal entry into a finite repeating sequence.

\paragraph{3.\ Exact reformulation as coordinate descent}

After aggregating examples by sign pattern $r = (M_{i1}, \ldots, M_{iN}) \in \{-1,+1\}^N$ with aggregate mass $w_t(r)$, the dynamics is:
\[
w_t(r) = \frac{q_r e^{-\lambda_t \cdot r}}{\sum_s q_s e^{-\lambda_t \cdot s}}, \qquad \lambda_{t+1} = \lambda_t + \operatorname{artanh}(\mu_{j_t}(\lambda_t))\, e_{j_t},
\]
where $\mu_j(\lambda) = \sum_r w_\lambda(r)\, r_j$ is the edge of hypothesis $j$, and $j_t \in \arg\max_j \mu_j(\lambda_t)$.

This is \textbf{greedy (Gauss--Southwell) coordinate descent} on the convex potential:
\[
F(\lambda) = \log \sum_r q_r e^{-\lambda \cdot r}.
\]

\paragraph{4.\ The problem lives at infinity, not in a compact basin}

Under $\gamma(M) > 0$, the selected edge satisfies $\mu_t \geq \gamma > 0$, so $\beta_t \geq \operatorname{artanh}(\gamma) > 0$. The potential decreases linearly: $F(\lambda_t) \leq F(\lambda_0) + \frac{t}{2}\log(1 - \gamma^2) \to -\infty$. Also $\|\lambda_t\|_1 \geq t \cdot \operatorname{artanh}(\gamma) \to \infty$. \textbf{This rules out any proof strategy based on compactness of sublevel sets or convergence to a stationary point.} The asymptotic dynamics is drift to infinity with a finite-dimensional transverse component.

\paragraph{5.\ Toric invariant structure}

Let $C \in \{0,1\}^{(N+1) \times q}$ be the extended incidence matrix. For every $z \in \ker C$, the monomial $I_z(x) = \prod_\ell x_\ell^{z_\ell}$ is exactly invariant under every branch map $T_j$. Orbits are confined to toric leaves, reducing the effective dimension to at most $N - 1$.

\paragraph{6.\ Periodic orbits carry canonical equal-weight vectors}

On a period-$p$ orbit with step sizes $\beta_t$ and selected coordinates $j_t$, define cumulative beta $\Gamma = \sum_{t=0}^{p-1} \beta_t e_{j_t}$. Then every active sign pattern $r$ satisfies $r \cdot \Gamma = c$ where $c = -\sum_\tau \log Z_\tau$. This is the exact constraint that the beta-weighted visit frequencies must balance across all support sets.

\paragraph{7.\ Asynchronous product decomposition}

If the sign-pattern space factors as $\mathcal{R} = \mathcal{R}_A \times \mathcal{R}_B$ with $q(r_A, r_B) = q_A(r_A) q_B(r_B)$, then $F(\lambda^A, \lambda^B) = F_A(\lambda^A) + F_B(\lambda^B)$ and the full AdaBoost dynamics is: compute the best edge in each subsystem, update whichever subsystem has the larger best edge. Global AdaBoost decomposes as \textbf{asynchronous competition between independent local AdaBoost subsystems}. Any universal theorem must survive such products.

\paragraph{8.\ The decisive structural finding: a circle-rotation normal form}

This is the most important observation across all our work. Consider two factorized local subsystems $A, B$, each near a hyperbolic periodic orbit, with scalar stable transverse modes parameterized by $u, v > 0$. Suppose on the relevant Poincar\'e section their current best edges are:
\[
E_A = \bar\mu + a_A u, \qquad E_B = \bar\mu + a_B v, \qquad a_A, a_B > 0,
\]
and selecting $A$ contracts $u$ by $\kappa_A \in (0,1)$ (leaving $v$ fixed), while selecting $B$ contracts $v$ by $\kappa_B \in (0,1)$ (leaving $u$ fixed).

Setting $x = \log(a_A u / a_B v)$, $\alpha = -\log \kappa_A$, $\beta = -\log \kappa_B$, the branch rule on the circle $\mathbb{R}/(\alpha + \beta)\mathbb{Z}$ is:
\[
x \mapsto \begin{cases} x - \alpha, & x \geq 0, \\ x + \beta, & x < 0. \end{cases}
\]

This is exactly conjugate to \textbf{rigid rotation by $\beta$ on a circle of circumference $\alpha + \beta$}. Therefore:

\begin{itemize}
\item If $\beta / (\alpha + \beta) \in \mathbb{Q}$, the symbolic itinerary is eventually periodic. \textbf{The conjecture holds.}
\item If $\beta / (\alpha + \beta) \notin \mathbb{Q}$, the symbolic itinerary is Sturmian and \textbf{never eventually periodic. The conjecture is false.}
\end{itemize}

\paragraph{9.\ The simplest counterexample attempt fails (but the mechanism is not dead)}

If two passive sign patterns are both correct on every hypothesis visited by the visible cycle ($s_j = +1$ for all $j$ with $\Gamma_j > 0$), then they both have the same contraction factor $\kappa_s = e^{c - \sum_{\Gamma_j > 0} \Gamma_j}$, so $\alpha = \beta$ and $\beta/(\alpha + \beta) = 1/2 \in \mathbb{Q}$. The simplest symmetric construction always gives rational rotation number.

\textbf{This does not kill the mechanism.} It means a counterexample requires either sign-changing passive modes (patterns with some $s_j = -1$ on visited coordinates, producing distinct contraction rates) or intrinsic stable modes of the visible cycle itself.

\paragraph{10.\ Low-dimensional results (limited strategic value)}

$N = 3$: fully resolved under $\gamma > 0$. Proof is completely specific to $N = 3$ (everything becomes linear). $N = 4$: all active families classified, all eventually periodic by explicit return map analysis. Cannot scale. Computational scans through $N = 7$ always show eventual cycles (periods up to 219), but this is not a proof.

\bigskip
\hrule
\bigskip

There are two paths:

\medskip
\noindent\textbf{Path A (Disproof):} Construct an explicit $M$-matrix that decomposes as a product of two subsystems, each admitting a hyperbolic periodic orbit with a scalar stable mode, where the contraction exponents $\alpha = -\log \kappa_A$ and $\beta = -\log \kappa_B$ satisfy $\beta/(\alpha + \beta) \notin \mathbb{Q}$. The contraction exponent of a passive sign pattern $s$ appended to a local cycle with cumulative beta vector $\Gamma$ is $\kappa_s = e^{c - s \cdot \Gamma}$. The simplest case (all-positive $s$) forces $\alpha = \beta$, so you need sign-changing passive modes: patterns with some $s_j = -1$ on coordinates visited by the cycle, giving $s \cdot \Gamma \neq s' \cdot \Gamma$ for two distinct patterns $s, s'$. Then $\log \kappa_A / \log \kappa_B = (c - s \cdot \Gamma)/(c - s' \cdot \Gamma)$, and you need this to be irrational. Since $c$ and $\Gamma$ involve $\operatorname{artanh}$ of rational functions composed through cycle steps, irrationality of their ratio is plausible but must be verified for a concrete construction.

\medskip
\noindent\textbf{Path B (Proof):} Prove that AdaBoost's structure forces $\log \kappa_A / \log \kappa_B \in \mathbb{Q}$ universally. This would require showing that contraction exponents arising from the specific algebraic form of AdaBoost ($\beta_t = \operatorname{artanh}(\mu_t)$, $\mu_t$ rational in the weights, toric invariants constraining eigenvalues of return maps) always maintain rational log-ratios. Note: a universal rationality constraint on eigenvalue ratios of piecewise-rational return maps would be remarkable and is likely false in general, which is circumstantial evidence favoring Path~A.

\smallskip
\noindent\emph{Focus on whichever direction you find most tractable. Be careful, precise, and rigorous. We need a complete resolution. {\color{blue}(This italicized section was added by the authors.)}}
\end{aibox}

\noindent GPT's first few attempts each pursued the proof direction (Path~B), but it eventually attempted to provide a disproof via the counterexample suggested in Path~A. After several failed counterexample attempts (an example of which is shown in \ref{aibox:failed}), GPT provided the following solution, which served as the foundation for the final solution detailed in the main text.

\begin{aibox}{GPT-5.4 Pro's First Successful Counterexample Attempt}

\paragraph{Theorem.}
There exists a finite $\{-1,+1\}$-valued matrix $\widetilde M$ with weak-learning margin $\gamma(\widetilde M)=1/5>0$ such that standard exhaustive AdaBoost on $\widetilde M$, started from the uniform distribution and using smallest-index tie-breaking, does \textbf{not} converge to any finite cycle.

Therefore, the COLT question regarding whether AdaBoost always cycles has a \textbf{negative} answer.

\paragraph{1.\ The Two Local Gadgets}

Define the following matrices:
\[
L_A=\begin{pmatrix}1&1&-1&-1\\-1&1&1&1\\1&-1&-1&1\\1&-1&1&-1\end{pmatrix}, \qquad L_B=\begin{pmatrix}1&1&-1&-1\\-1&1&1&1\\1&-1&-1&1\\1&-1&1&-1\\1&-1&1&1\end{pmatrix}
\]

Let $M_0=L_A\boxplus L_B$ be the $20\times 8$ block product:
\[
(M_0)_{(p,q),j}=(L_A)_{p,j} \quad (1\le j\le 4), \qquad (M_0)_{(p,q),4+j}=(L_B)_{q,j} \quad (1\le j\le 4)
\]

For any joint distribution $W$ on the 20 rows, write its marginals as $a \in \Delta^3$ on the $L_A$-rows and $b \in \Delta^4$ on the $L_B$-rows. Then the edge vector splits cleanly:
\[
(W^\top M_0)_j=(a^\top L_A)_j \quad (1\le j\le 4), \qquad (W^\top M_0)_{4+j}=(b^\top L_B)_j \quad (1\le j\le 4)
\]

Hence, on product states $W=a\otimes b$, selecting an $A$-column updates only $a$, and selecting a $B$-column updates only $b$.

\paragraph{2.\ Weak-Learning Margin}

Let $\beta=(2,1,1,1)$. Every row $r$ of $L_A$ satisfies $r\cdot \beta=1$, and every row $r$ of $L_B$ satisfies $r\cdot \beta\in\{1,3\}$. Therefore, for any distribution $u$ on $L_A$ or $L_B$:
\[
2\mu_1+\mu_2+\mu_3+\mu_4=\mathbb{E}[r\cdot \beta]\ge 1
\]
Thus $\max_j \mu_j\ge 1/5$, implying $\gamma(L_A)\ge 1/5$ and $\gamma(L_B)\ge 1/5$.

Equality holds for $L_A$ at $u_*=(1/5,2/5,1/5,1/5)$, since $u_*^\top L_A=(1/5,1/5,1/5,1/5)$. Because $L_A$ sits inside $L_B$ as the first four rows, $\gamma(L_B)\le 1/5$ as well. Hence:
\[
\gamma(L_A)=\gamma(L_B)=1/5
\]

For the product matrix $M_0$:
\[
\gamma(M_0)=\min_{a,b}\max\Big(\max_j(a^\top L_A)_j,\ \max_j(b^\top L_B)_j\Big)=\max(\gamma(L_A),\gamma(L_B))=1/5
\]

\paragraph{3.\ The Common Period-10 Orbit}

Both $L_A$ and $L_B$ have the same active period-10 orbit on the face $w_5=0$, governed by the 5-step half-words:
\[
H_0=(1,3,4,1,2), \qquad H_1=(1,4,3,1,2)
\]

The cycle points at the start of these half-words are:
\begin{align*}
P_0&=(0.1371168\ldots,\ 0.3628831\ldots,\ 0.2420046\ldots,\ 0.2579953\ldots)\\
P_1&=(0.1371168\ldots,\ 0.3628831\ldots,\ 0.2579953\ldots,\ 0.2420046\ldots)
\end{align*}

At both $P_0$ and $P_1$, the selected minimum edge is $\mu_*=0.2742336\ldots$ for hypothesis 1.

Along each half-word, the other four selected edges are strictly larger. The minimum gap between the selected edge and the best unselected edge is $\Delta \approx 0.0758695 > 0$.

Additionally, we define the local branch gap $g_{\rm br} > 0$ as the minimum, over all 10 orbit points in both gadgets and both half-words, of the selected edge minus the largest competing edge inside the same gadget. Numerically, $g_{\rm br} \approx 0.0250986 > 0$. This ensures the local 5-step branch maps are genuinely analytic on neighborhoods of $P_i$.

\paragraph{4.\ The Two Competing Stable Multipliers}

For $L_A$, the period-10 return at $P_0$ has a dominant stable multiplier $\lambda \approx 0.0944920\ldots$ and smaller stable multipliers $0.0011453\ldots$ and 0.

For $L_B$, the period-10 return at $P_0$ has a dominant stable multiplier $\kappa \approx 0.2038817\ldots$ and smaller stable multipliers $\lambda$, $0.0011453\ldots$, and 0.

The key arithmetic fact is that their logarithms are incommensurate:
\[
\frac{\log \lambda}{\log \kappa}\notin \mathbb{Q}
\]

\textbf{Proof:} The multiplier $\lambda$ is a root of an irreducible degree-14 polynomial over $\mathbb{Q}$ with leading coefficient 16 and constant term 4, yielding algebraic norm $N_{\mathbb{Q}(\lambda)/\mathbb{Q}}(\lambda)=1/4$. The multiplier $\kappa$ is a root of the irreducible polynomial $z^7-35z^6+385z^5-1835z^4+2315z^3-625z^2+51z-1$, yielding $N_{\mathbb{Q}(\kappa)/\mathbb{Q}}(\kappa)=1$. If $\log \lambda/\log \kappa\in\mathbb{Q}$, then $\lambda^m=\kappa^n$ for integers $m,n>0$. Taking norms in the compositum field forces $(1/4)^{m[L:\mathbb{Q}(\lambda)]}=1$, which is impossible.

\paragraph{5.\ Adapted Dominant Coordinates and Invariant Cones}

Fix $G \in \{A,B\}$. Let $F_i^G$ be the analytic 5-step branch map on the half-word $H_i$, sending a neighborhood of $P_i^G$ to one of $P_{i\oplus1}^G$. Let $J_i^G = D F_i^G(P_i^G)$.

The dominant full-return multipliers are $\mu_A = \lambda$ and $\mu_B = \kappa$. Choose $\chi_i^G > 0$ such that $\chi_0^G\chi_1^G = \mu_G$. We extract the right and left dominant eigenvectors $e_i^G$ and $\phi_i^G$ so that:
\[
J_i^G e_i^G = \chi_i^G e_{i\oplus1}^G, \qquad \phi_{i\oplus1}^G J_i^G = \chi_i^G \phi_i^G
\]

We normalize these vectors by $\phi_i^G(e_i^G) = 1$ and $D(E-\mu_*)(P_i^G)(e_i^G) = 1$. The second normalization is guaranteed by verified positive visibility.

Define the transverse spaces $N_i^G = \ker \phi_i^G$. The Jacobian restricts to these spaces: $J_i^G N_i^G \subset N_{i\oplus1}^G$. Let $A_i^G$ be this restriction. Because the transverse spectrum is strictly bounded by the dominant multipliers in both gadgets, we have $\rho(A_1^G A_0^G) < \chi_0^G\chi_1^G$.

\textbf{Adapted Norms:} By standard linear algebra, if $\rho(A_1 A_0) < \tau_0 \tau_1$, there exist equivalent norms $|\cdot|_0$ and $|\cdot|_1$ such that $|A_0 y|_1 \le \tau_0 |y|_0$ and $|A_1 z|_0 \le \tau_1 |z|_1$. We choose $\tau_i^G < \chi_i^G$ and equip $N_i^G$ with such adapted norms $|\cdot|_{i,G}$.

Any point $q$ near $P_i^G$ uniquely decomposes as $q = P_i^G + x e_i^G + y$, with $y \in N_i^G$ and $x = \phi_i^G(q - P_i^G)$. Taylor expansion yields:
\begin{align*}
x^+ &= \chi_i^G x + O((|x|+|y|)^2)\\
y^+ &= A_i^G y + O((|x|+|y|)^2)
\end{align*}

Let $d_i^G(q) = E(q) - \mu_*$ be the actual selected-edge excess. Since $d_i^G(P_i^G) = 0$ and $D d_i^G(e_i^G) = 1$, we have $d_i^G(q) = x + \ell_i^G(y) + O((|x|+|y|)^2)$ for some linear form $\ell_i^G$.

Fix $\eta, r > 0$ small and define the one-sided cone sectors:
\[
S_i^G = \Big\{ P_i^G + x e_i^G + y : 0 < x < r, \ |y|_{i,G} \le \eta x \Big\}
\]

Because $\tau_i^G < \chi_i^G$, shrinking $\eta$ and $r$ ensures the cones are forward-invariant: $F_i^G(S_i^G) \subset S_{i\oplus1}^G$. Furthermore, the transverse ratio contracts whenever the gadget updates. Consequently, for any prescribed $\delta > 0$, the true defect strongly aligns with the pure coordinate $x$:
\[
(1-\delta)x \le d_i^G(q) \le (1+\delta)x \qquad (q \in S_i^G)
\]

\paragraph{6.\ Bursts on the Product Manifold}

We work on the invariant product manifold $\mathcal{P}=\{a\otimes b:\ a\in\Delta^3,\ b\in\Delta^4\}$. At a burst boundary $W_{5n} = a_n \otimes b_n$, we assume $a_n \in S_{i_n}^A$ and $b_n \in S_{j_n}^B$.

Write their local coordinates as:
\[
a_n = P_{i_n}^A + x_n e_{i_n}^A + y_n, \qquad b_n = P_{j_n}^B + s_n e_{j_n}^B + z_n
\]

The best current $A$-edge is $\mu_* + d_{i_n}^A(a_n)$ and the best $B$-edge is $\mu_* + d_{j_n}^B(b_n)$. Global tie-breaking favors $A$ iff $d_{i_n}^A(a_n) \ge d_{j_n}^B(b_n)$.

Define the ambiguity ratio $\vartheta = (1+\delta)/(1-\delta)$. By the comparability estimate:
\begin{itemize}
\item If $x_n \ge \vartheta s_n$, $A$ is forced to win.
\item If $x_n < \vartheta^{-1} s_n$, $B$ is forced to win.
\end{itemize}
Only the narrow strip $\vartheta^{-1} \le x_n/s_n < \vartheta$ is ambiguous.

Shrink the sectors so that every edge changes by at most $\min(g_{\rm br}, \Delta)/4$ inside them. If $A$ wins the first round, its local itinerary is exactly $H_{i_n}$. After one $A$-update, its selected edge is at least $\mu_* + \Delta/2$, while the frozen $B$-edge remains $\mu_* + O(s_n)$. Thus, the first round firmly dictates the entire 5-step burst.

\paragraph{7.\ Bounded Log-Ratio and Irrational Winner Frequency}

Define the pure log-ratio $\ell_n = \log(x_n/s_n)$.

If $A$ wins burst $n$, $B$ is frozen. The state updates as $x_{n+1} = \chi_{i_n}^A x_n(1+\varepsilon_n)$ and $s_{n+1} = s_n$. Because the trajectory remains in the cone, the Taylor remainder gives $|\varepsilon_n| \le C x_n$. Likewise, if $B$ wins, $s_{n+1} = \chi_{j_n}^B s_n(1+\varepsilon_n)$ with $|\varepsilon_n| \le C s_n$.

Set $\alpha_i = -\log \chi_i^A > 0$ and $\beta_j = -\log \chi_j^B > 0$. The increment equation is:
\[
\ell_{n+1} = \begin{cases} \ell_n - \alpha_{i_n} + r_n, & \text{$A$ wins} \\ \ell_n + \beta_{j_n} + r_n, & \text{$B$ wins} \end{cases}
\]
where $r_n = \log(1+\varepsilon_n)$, satisfying $|r_n| \le C'(x_n + s_n)$. By taking the sector size $r$ small enough, we ensure $|r_n| \le \frac{1}{4}\min(\alpha_{\min}, \beta_{\min})$.

Let $c = \log \vartheta$.
\begin{itemize}
\item If $\ell_n \ge c$, $A$ is forced to win, yielding $\ell_{n+1} \le \ell_n - \frac{1}{2}\alpha_{\min}$.
\item If $\ell_n \le -c$, $B$ is forced to win, yielding $\ell_{n+1} \ge \ell_n + \frac{1}{2}\beta_{\min}$.
\end{itemize}

This restorative drift guarantees that the sequence $(\ell_n)$ is uniformly bounded. Thus, runs of consecutive $A$-bursts and $B$-bursts are strictly finite.

Bounded runs imply that each subsystem wins at least once every $R$ bursts. Since a gadget's coordinate is multiplied by a factor strictly less than 1 upon winning, the total scale decays exponentially: $x_n + s_n \le C \rho^n$ for some $0 < \rho < 1$.

Consequently, the error terms are summable: $\sum_n |r_n| < \infty$.

Summing the increment equation gives:
\[
\ell_n - \ell_0 = -\sum_{\substack{k<n,\\ \text{$A$ wins}}} \alpha_{i_k} + \sum_{\substack{k<n,\\ \text{$B$ wins}}} \beta_{j_k} + \sum_{k<n} r_k
\]

Because $\ell_n$ is bounded and $\sum r_k$ converges, we obtain:
\[
\sum_{\substack{k<n,\\ \text{$A$ wins}}} \alpha_{i_k} - \sum_{\substack{k<n,\\ \text{$B$ wins}}} \beta_{j_k} = O(1)
\]

The phase index toggles each time a subsystem wins. Therefore, the counts of phase 0 and phase 1 among the wins differ by at most 1. Defining the averages $\bar\alpha = (\alpha_0+\alpha_1)/2 = -\frac{1}{2}\log\lambda$ and $\bar\beta = (\beta_0+\beta_1)/2 = -\frac{1}{2}\log\kappa$, we get:
\[
\bar\alpha N_A(n) - \bar\beta N_B(n) = O(1)
\]

Since $N_A(n) + N_B(n) = n$, the Ces\`aro frequency of $A$-wins is:
\[
\frac{N_A(n)}{n} \to \frac{\bar\beta}{\bar\alpha+\bar\beta} = \frac{-\log\kappa}{-\log\lambda-\log\kappa}
\]

Because $\log\lambda/\log\kappa \notin \mathbb{Q}$, this limit is irrational. Therefore, the burst-winner sequence is strictly aperiodic.

Since the phase pair updates as $(i,j) \mapsto (i\oplus1, j)$ on an $A$-burst and $(i, j\oplus1)$ on a $B$-burst, an eventually periodic phase pair would force an eventually periodic winner sequence. Thus, the phase pair $(i_n, j_n)$ is also aperiodic.

\paragraph{8.\ No Cycle Convergence of the Distributions}

Define the four product cycle points $Q_{ij} = P_i^A \otimes P_j^B$ for $i,j \in \{0,1\}$. These points are distinct.

Since $x_n, s_n \to 0$ and the cone sectors $S_i^G$ are forward-invariant, the burst-boundary states approach the cycle points:
\[
\operatorname{dist}(W_{5n}, Q_{i_n j_n}) \to 0
\]

If the full AdaBoost trajectory $W_t$ converged to an $\ell$-cycle, the subsequence $W_{5n}$ would also converge to a finite cycle. However, for all large $n$, $W_{5n}$ enters four disjoint neighborhoods around the $Q_{ij}$ dictated entirely by the label $(i_n, j_n)$.

Convergence to a finite cycle would therefore force the phase pair $(i_n, j_n)$ to be eventually periodic, resulting in a contradiction. Thus, AdaBoost on $M_0$ does not converge to any finite cycle for any initial point in the open product cone $S_0^A \times S_0^B$.

\paragraph{9.\ Returning to the True COLT Initialization}

The COLT problem requires the standard uniform initial distribution $D_1(i)=1/m$.

Because $S_0^A \times S_0^B$ is a codimension-0 open set in the product manifold $\mathcal{P}$, rational points are dense within it. We can therefore choose a rational starting point:
\[
W_0 = a_0 \otimes b_0 \in S_0^A \times S_0^B
\]

Write the 20 coordinates of $W_0$ as $W_0(r) = n_r/L$ where $n_r \in \mathbb{N}$ and $\sum n_r = L$. Form the final matrix $\widetilde M$ by replacing row $r$ of $M_0$ with $n_r$ identical copies.

Run AdaBoost on $\widetilde M$ from the exact uniform start $D_1(i)=1/L$. Duplicate rows receive identical multiplicative updates and remain equal forever. If $\pi$ denotes aggregation over duplicate blocks, then $\pi(D_t) = W_t$, where $W_t$ is exactly the orbit on $M_0$ started from $W_0$. The margin remains unchanged at $\gamma(\widetilde M) = 1/5$.

If the uniform-start orbit on $\widetilde M$ converged to a cycle, its aggregation $W_t$ would too, leading to a contradiction.

Hence, standard exhaustive AdaBoost on the finite matrix $\widetilde M$, from a uniform initialization, does not converge to a cycle. $\blacksquare$

\end{aibox}

This initial proposal contained various imprecisions and other shortcomings, but was developed enough that we could push it toward a tighter argument. From this point, the work consisted primarily of prompting both Claude and GPT to critique GPT's solutions, then feeding these critiques to GPT in fresh conversations for revision. To avoid misleading GPT with spurious objections, we ran self-verification loops with both Claude and GPT in fresh conversations, iterating until it no longer produced false positives (correct aspects of the solution incorrectly flagged as errors). At the expository level, Claude was primarily used for resolving minor issues and adding technical detail to the text. 

We observed that once GPT had identified the approach of constructing a counterexample from a set of sign-matrices that provably did not converge, it became far more autonomous and quickly arrived at the final structure of the solution presented here. This suggests that the model is currently capable of attacking a problem from many angles provided they lead to the same conclusion, but struggles to pivot between approaches with substantively different outcomes—in this case, a disproof via counterexample versus a proof of the conjecture. We also note that the model easily produced the code for the certificate. This is perhaps unsurprising given the coding ability of recent models, but we were nonetheless struck by how nearly it ``one-shotted" the certificate.

\begin{aibox}{Example of a Failed GPT Counterexample Attempt}\label{aibox:failed}

I can close this along Path A, using the reduction in your item 8.

Let
\[
L=
\begin{pmatrix}
1&1&-1&-1\\
-1&1&1&1\\
1&-1&-1&1\\
1&-1&1&-1\\
1&1&-1&1
\end{pmatrix}.
\]

Call the first four rows $(r_1,r_2,r_3,r_4)$, and the fifth row $s$.

The active four-row face $w_s=0$ contains an exact period-10 AdaBoost orbit. The extra row $s$ gives a passive transverse multiplier. The key point is that the passive multiplier and the leading intrinsic multiplier of the active return map have irrational log-ratio. Taking two factorized copies of this local system then gives an irrational rotation number, so the subsystem-selection itinerary is not eventually periodic.

\paragraph{1.\ An explicit local period-10 orbit}

On the face $w_s=0$, write the active weights as $(a,b,c,d)$ on $(r_1,r_2,r_3,r_4)$.

For these four rows, exact line search has the usual ``halve the two sign-groups'' form. The branch maps are
\[
T_1(a,b,c,d)=\left(\frac{a}{2(1-b)},\frac12,\frac{c}{2(1-b)},\frac{d}{2(1-b)}\right),
\]
\[
T_2(a,b,c,d)=\left(\frac{a}{2(a+b)},\frac{b}{2(a+b)},\frac{c}{2(c+d)},\frac{d}{2(c+d)}\right),
\]
\[
T_3(a,b,c,d)=\left(\frac{a}{2(a+c)},\frac{b}{2(b+d)},\frac{c}{2(a+c)},\frac{d}{2(b+d)}\right),
\]
\[
T_4(a,b,c,d)=\left(\frac{a}{2(a+d)},\frac{b}{2(b+c)},\frac{c}{2(b+c)},\frac{d}{2(a+d)}\right).
\]

Take the itinerary
\[
1,2,1,3,4,1,2,1,4,3.
\]

Impose the half-cycle symmetry
\[
T_4T_3T_1T_2T_1(a,b,c,d)=(a,b,d,c).
\]

This forces
\[
a=\tfrac12-c,\qquad b=\tfrac12-d,
\]
and $(c,d)$ must satisfy two polynomial equations. Eliminating $d$ gives
\[
p(c)=2048c^7-2048c^6-640c^5+1328c^4-416c^3+4c^2+12c-1=0.
\]
Let $c_*$ be the real root in $(0.2217,0.2218)$:
\[
c_* \approx 0.2217481715767779812786\ldots
\]
Then
\begin{align*}
d_*&=-\frac{294912c_*^6-227840c_*^5-141056c_*^4+156768c_*^3-25836c_*^2-3660c_*+719}{346} \\
&\approx 0.2364004668362263385034\ldots
\end{align*}
and therefore
\[
a_*=\tfrac12-c_*\approx 0.2782518284232220187213\ldots,
\]
\[
b_*=\tfrac12-d_*\approx 0.2635995331637736614965\ldots.
\]
So
\[
w_*=(a_*,b_*,c_*,d_*,0)
\]
is an exact point on a period-10 orbit for the 5-row local system $L$, with the stated itinerary.
The chosen-edge values on one half-cycle are
\begin{align*}
&0.47280093367245\ldots,\quad
0.37785395440970\ldots,\quad
0.27423367563769\ldots,\\
&0.40494202576594\ldots,\quad
0.35010319148137\ldots
\end{align*}
and the competing edges are strictly smaller at every step, so there are no ties on this orbit.

\paragraph{2.\ The cumulative beta vector and the passive multiplier}

The four active rows satisfy
\[
r_i\cdot(2,1,1,1)=1\qquad(i=1,2,3,4).
\]

Hence, by your item 6, the cumulative beta vector on this period-10 orbit has the form
\[
\Gamma=g(2,1,1,1)
\]
for some $g>0$, and the common active value is $c=g$.

Let
\[
\rho=e^{-g}.
\]
Exact elimination from the orbit gives the minimal polynomial of $\rho$:
\[
f(\rho)=\rho^7-9\rho^6+23\rho^5-3\rho^4-45\rho^3+13\rho^2+5\rho-1=0,
\]
with
\[
\rho\approx 0.451532648724610548322917322659\ldots
\]

Now choose the passive row
\[
s=(1,1,-1,1),
\]
which is exactly the fifth row of $L$. Since
\[
s\cdot(2,1,1,1)=3,
\]
its passive exponent is
\[
s\cdot\Gamma-c=(3-1)g=2g.
\]
So the passive multiplier is
\[
\kappa=e^{-2g}=\rho^2.
\]

Eliminating $\rho$ gives the exact polynomial for $\kappa$:
\[
h(\kappa)=\kappa^7-35\kappa^6+385\kappa^5-1835\kappa^4+2315\kappa^3-625\kappa^2+51\kappa-1=0,
\]
with
\[
\kappa\approx 0.203881732864262543831018082312\ldots
\]

Moreover, $h$ is irreducible modulo $5$, so it is the minimal polynomial of $\kappa$. Since $h$ is monic of odd degree with constant term $-1$,
\[
N_{\mathbb Q(\kappa)/\mathbb Q}(\kappa)=(-1)^7(-1)=1.
\]

\paragraph{3.\ The leading intrinsic multiplier of the active return map}

Now look at the exact 10-step return map on the active 4-row face $w_s=0$. In local coordinates $(a,b,c)$, its Jacobian $J$ at $w_*$ has
\[
\det J=0,
\]
and the two nonzero multipliers are the roots of
\[
z^2-\operatorname{tr}(J)\,z+e_2(J)=0,
\]
where $\operatorname{tr}(J)$ and $e_2(J)$ lie in $\mathbb Q(c_*)$.

Eliminating $c_*$ against $p(c_*)=0$ gives the exact degree-14 polynomial satisfied by the nonzero intrinsic multipliers:
\[
\begin{aligned}
g(z)=\;&16z^{14}-117232z^{13}-8330040z^{12}-3628852593z^{11}
+766183444916z^{10}\\
&-37440753370248z^9+504085851461616z^8+57085439654130z^7\\
&+18299281845941356z^6-5676298472505780z^5+422743039128068z^4\\
&-4789675432569z^3+7468158864z^2-2899292z+4.
\end{aligned}
\]

This polynomial is irreducible modulo $5$, hence irreducible over $\mathbb Q$. Its larger root in $(0,1)$ is
\[
\lambda\approx 0.094492059489127164899022868216\ldots
\]

This is the leading intrinsic stable multiplier of the period-10 return map on the active face.

Because $g$ is irreducible, the norm of $\lambda$ is
\[
N_{\mathbb Q(\lambda)/\mathbb Q}(\lambda)
=(-1)^{14}\frac{4}{16}=\frac14.
\]

So we now have two exact contraction factors:
\[
\lambda\approx 0.09449205948912716,\qquad
\kappa\approx 0.20388173286426254.
\]

\paragraph{4.\ The log-ratio is irrational}

If $\log \lambda / \log \kappa$ were rational, then there would exist $m,n\in\mathbb N$ with
\[
\lambda^m=\kappa^n.
\]

Let $L=\mathbb Q(\lambda,\kappa)$. Taking absolute norms from $L$ to $\mathbb Q$,
\[
N_L(\lambda)^m=N_L(\kappa)^n.
\]

But
\[
N_L(\kappa)=N_{\mathbb Q(\kappa)/\mathbb Q}(\kappa)^{[L:\mathbb Q(\kappa)]}=1,
\]
whereas
\[
N_L(\lambda)=N_{\mathbb Q(\lambda)/\mathbb Q}(\lambda)^{[L:\mathbb Q(\lambda)]}
=\left(\frac14\right)^{[L:\mathbb Q(\lambda)]}\neq 1.
\]

Contradiction. Therefore
\[
\frac{\log \lambda}{\log \kappa}\notin\mathbb Q.
\]

Equivalently,
\[
\frac{-\log \kappa}{(-\log \lambda)+(-\log \kappa)}
\]
is irrational.

Numerically,
\[
-\log\lambda\approx 2.3592394745829529,\qquad
-\log\kappa\approx 1.5902151941140905,
\]
so the rotation number is approximately
\[
0.402640\ldots
\]
and is irrational by the norm argument above.

\paragraph{5.\ The global counterexample}

Now take two independent copies of the same 5-row local system $L$. The global sign-pattern matrix is the explicit $25\times 8$ matrix $M$ with rows indexed by $(p,q)\in[5]^2$:
\[
M_{(p,q),j}=L_{p,j}\quad(1\le j\le 4),\qquad
M_{(p,q),4+j}=L_{q,j}\quad(1\le j\le 4).
\]

So the row set is exactly
\[
\mathrm{Rows}(M)=\mathrm{Rows}(L)\times \mathrm{Rows}(L)\subset\{\pm1\}^8.
\]

For the 10-step Poincar\'e return map of the local 5-row system at the period-10 orbit, the spectrum is
\[
\{\kappa,\lambda,\lambda_2,0\}
\]
with
\[
0<\lambda_2<\lambda<\kappa<1.
\]

Because $\kappa$ and $\lambda$ are simple and spectrally separated, standard invariant-manifold theory gives one-dimensional analytic invariant manifolds $W^\kappa$ and $W^\lambda$ tangent to the $\kappa$- and $\lambda$-eigendirections. By Koenigs linearization, on those manifolds the return map is analytically conjugate to
\[
u\mapsto \kappa u,\qquad v\mapsto \lambda v.
\]

Take one copy initialized on $W^\lambda\setminus\mathcal O$ and the other on $W^\kappa\setminus\mathcal O$, both sufficiently close to the local cycle. Since both copies are built from the same local orbit, the baseline best-edge sequence is identical in the two factors. The competition exponents are therefore
\[
\alpha=-\log\lambda,\qquad \beta=-\log\kappa,
\]
with $\alpha/\beta\notin\mathbb Q$.

Now invoke your item 8: the induced competition map on the phase circle is rigid rotation with irrational rotation number. Hence the subsystem-selection itinerary is Sturmian and \textbf{not eventually periodic}. Choosing the initial phase off the countable boundary orbit keeps all comparisons strict, so the no-tie condition holds for the whole orbit.

Therefore the resulting AdaBoost trajectory for this explicit $M$ does \textbf{not} phasewise converge to a finite cycle.

\paragraph{Conclusion.}

This gives an explicit disproof of ``AdaBoost always cycles'' in the phasewise sense.

The simplest passive/passive construction still fails for the reason you identified: all passive exponents are rational multiples of the common scalar $g$. The actual counterexample uses one passive multiplier and one intrinsic local multiplier of the same visible cycle. Their norms are different ($1$ versus $1/4$), which forces irrational log-ratio and hence irrational rotation number in the asynchronous product.

So the conjecture is false.

\end{aibox}

\noindent While Claude could not reliably determine the correctness of GPT's attempts at producing counterexamples, GPT itself could---through both follow-up prompts in the same conversation and queries in fresh conversations. 

\subsection{Role of the human}
While the models drove the majority of the reasoning process, the human authors guided the overall strategy using carefully crafted prompts and by prioritizing certain directions. In particular, since the model could recognize the open status of the problem, it often refused to solve the problem--therefore, adding certain directions was crucial to making progress on the problem. Additionally, it took roughly two weeks and slightly less than 100 prompts with GPT-5.4 Pro to produce the final solution. Since each GPT-5.4 Pro (extended thinking) query could take up to two hours to complete, the selection of which directions to pursue and which to abandon was important from a resources perspective. One such example is the explicit instruction to GPT to pursue the disproof approach, given its numerous failures in attempting a proof.

We reviewed the final solution written by the models along with the certificate code. This component is still critical to the research process, as both Claude and GPT frequently missed gaps, errors, and ambiguities in the solution. While we found that multiple fresh conversations with the models and carefully crafted prompts (using language such as ``be very careful in your review, as ambiguity or imprecision in the solution are unacceptable") generally led to the highest-quality reviews from the models, there were still several areas in the solution that required additional elaboration and clarification that were only identified by human experts. That said, the speed and sophistication with which the top-performing models reasoned through and documented their findings suggest that their capabilities may already be comparable to those of human experts in the field.

\section*{Acknowledgements}
The solution presented in this manuscript is largely due to an AI-driven workflow combining OpenAI's GPT-5.4 Pro and Anthropic's Claude Opus 4.6. We thank Edgar Dobriban for very helpful discussions and suggestions, along with his work on SolveAll.org, which is where the author discovered this particular problem.

\bibliographystyle{plain}
\bibliography{references}

@article{FS1997,
  author    = {Yoav Freund and Robert E. Schapire},
  title     = {A Decision-Theoretic Generalization of On-Line Learning and an Application to Boosting},
  journal   = {Journal of Computer and System Sciences},
  volume    = {55},
  number    = {1},
  pages     = {119--139},
  year      = {1997}
}

@book{Moore2009,
  author    = {Ramon E. Moore and R. Baker Kearfott and Michael J. Cloud},
  title     = {Introduction to Interval Analysis},
  publisher = {SIAM},
  address   = {Philadelphia},
  year      = {2009}
}

@book{Lang2002,
  title={Algebra},
series = {Graduate Texts in Mathematics},
  author={Lang, Serge},
  year={2012},
  publisher={Springer}
}

@misc{solveall2026,
  author = {Edgar Dobriban},
  title = {{SolveAll.org} - real research problems. No known solutions.},
  year = {2026},
  howpublished = {\url{https://solveall.org/}},
  note = {Accessed: 2026-04-06}
}

@article{meurer2017sympy,
  title={SymPy: symbolic computing in Python},
  author={Meurer, Aaron and Smith, Christopher P and Paprocki, Mateusz and {\v{C}}ert{\'\i}k, Ond{\v{r}}ej and Kirpichev, Sergey B and Rocklin, Matthew and Kumar, AMiT and Ivanov, Sergiu and Moore, Jason K and Singh, Sartaj and others},
  journal={PeerJ Computer Science},
  volume={3},
  pages={e103},
  year={2017},
  publisher={PeerJ Inc.}
}

@article{RDS2004,
  author    = {Cynthia Rudin and Ingrid Daubechies and Robert E. Schapire},
  title     = {The Dynamics of {AdaBoost}: Cyclic Behavior and Convergence of Margins},
  journal   = {Journal of Machine Learning Research},
  volume    = {5},
  pages     = {1557--1595},
  year      = {2004}
}

@book{cox1997ideals,
  title={Ideals, varieties, and algorithms},
  author={Cox, David and Little, John and O'shea, Donal and Sweedler, Moss},
  year={1997},
  publisher={Springer}
}

@inproceedings{RSD2012,
  author    = {Cynthia Rudin and Robert E. Schapire and Ingrid Daubechies},
  title     = {Open Problem: Does {AdaBoost} Always Cycle?},
  booktitle = {Proceedings of the 25th Annual Conference on Learning Theory (COLT)},
  series    = {Proceedings of Machine Learning Research},
  volume    = {23},
  pages     = {46.1--46.4},
  publisher = {PMLR},
  year      = {2012}
}

@book{SF2012,
  author    = {Robert E. Schapire and Yoav Freund},
  title     = {Boosting: Foundations and Algorithms},
  publisher = {MIT Press},
  year      = {2012}
}

\end{document}